\documentclass{article}
\usepackage{iclr2026_conference,times}
\usepackage{verbatim}
\usepackage{amsmath}
\usepackage{amsthm}
\usepackage{graphicx}
\usepackage{amssymb}
\usepackage{algorithm}
\usepackage{algpseudocode}
\usepackage{amsfonts} 
\usepackage{graphicx}
\usepackage{tabularx}
\usepackage{booktabs}
\usepackage{makecell}
\usepackage{url}

\usepackage[normalem]{ulem}

\usepackage[acronym, nonumberlist]{glossaries}
\newacronym{WAXp}{wAXp}{Weak Abductive Explanation}
\newacronym{distanceAXp}{$\epsilon$‑AXp}{distance-restricted explanations}
\newacronym{verixplus}{VERI{\large X}+}{}
\newacronym{verix}{VERI{\large X}}{}
\theoremstyle{definition}
\newtheorem{definition}{Definition}[section] 
\newtheorem*{definition*}{Definition}

\newtheorem{lemma}{Lemma}[section] 
\newtheorem*{lemma*}{Lemma}

\newtheorem*{theorem*}{Theorem}

\newtheorem{proposition}{Proposition}[section] 

\newtheorem*{problem*}{Problem}

\newcommand{\AXp}{\text{AXp}}

\newcommand{\wAXPA}{\text{wAXp}^A}
\newcommand{\wAXPAmin}{$\text{wAXp}^{A^{\star}}$}
\newcommand{\newtext}[1]{{\color{black}#1}}

\title{\textbf{FAME}: \uline{F}ormal \uline{A}bstract \uline{M}inimal \uline{E}xplanation for Neural Networks}

\author{
  Ryma Boumazouza\thanks{Equal contribution.}\ \   $^{1,2}$, Raya Elsaleh\footnotemark[1]\ \ $^{3}$, Melanie Ducoffe$^{1,2}$, Shahaf Bassan$^{3}$ and Guy Katz$^{3}$ \\
  \vspace{0.2cm}
  $^1$Airbus SAS, France, $^2$IRT Saint-Exupery, France, $^3$The Hebrew University of Jerusalem, Israel
}

\iclrfinalcopy
\begin{document}
\maketitle

\begin{abstract}
We propose \textbf{FAME} (Formal Abstract Minimal Explanations), a new class of abductive explanations grounded in abstract interpretation. FAME is the first method to scale to large neural networks while reducing explanation size. Our main contribution is the design of dedicated perturbation domains that eliminate the need for traversal order. FAME progressively shrinks these domains and leverages LiRPA-based bounds to discard irrelevant features, ultimately converging to a \textbf{formal abstract minimal explanation}.  
To assess explanation quality, we introduce a procedure that measures the worst-case distance between an abstract minimal explanation and a true minimal explanation. This procedure combines adversarial attacks with an optional \gls{verixplus} refinement step. We benchmark FAME against \gls{verixplus} and demonstrate consistent gains in both explanation size and runtime on medium- to large-scale neural networks. 
\end{abstract}
\section{Introduction}

\begin{figure}[h!]
    \centering
    \includegraphics[width=0.78\linewidth]{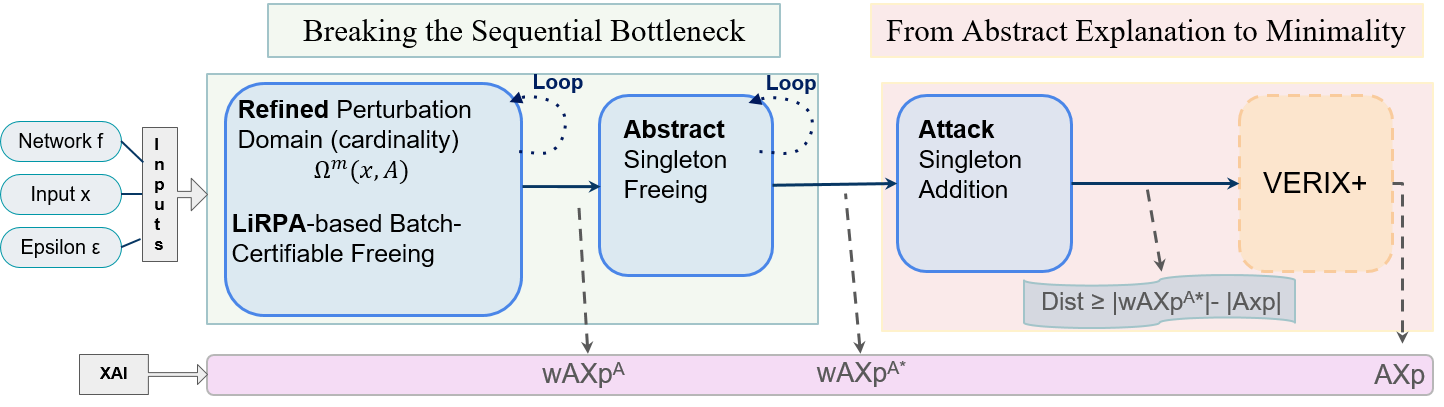}
    \caption{
    \textbf{FAME Framework.} 
    The pipeline operates in two main phases \newtext{\textbf{(1) Abstract Pruning (Green)}} 
    phase leverages abstract interpretation (LiRPA) to simultaneously free a large number of irrelevant \newtext{(pixels that are certified to have no influence on the model's decision)} features \newtext{based on a batch certificate} (Section~\ref{sec:abstract-simult-freeing}). \newtext{This iterative process operates} within a refined, cardinality-constrained perturbation domain, $\Omega^m(x,A)$ (Eq.~\ref{eq:refined-domain}) \newtext{to progressively tighten the domain}; To ensure that the final explanation is as small as possible, the remaining features that could not be freed in batches are tested individually (Section~ \ref {sec:perturbation_domain}). 
    \newtext{\textbf{(2) Exact Refinement (Orange)}} phase identifies the final necessary features using singleton addition attacks and, if needed, a final run of \gls{verixplus} (Section~\ref {sec:distance-to-minimality}). The difference in size, $\lvert \text{\wAXPAmin} \rvert - \lvert \AXp \rvert$, serves as an evaluation metric 
    of phase 1.
}
    \label{fig:fameframework}
\end{figure}

Neural network-based systems are being applied across a wide range of domains. Given AI tools' strong capabilities in complex analytical tasks, a significant portion of these applications now involves tasks that require reasoning. These tools often achieve impressive results in problems requiring intricate analysis to reach correct conclusions. Despite these successes, a critical challenge remains: understanding the reasoning behind neural network decisions. The internal logic of a neural network is often opaque, with its conclusions presented without accompanying justifications. This lack of transparency undermines the trustworthiness and reliability of neural networks, especially in high-stakes or regulated environments. Consequently, the need for interpretable and explainable AI (XAI) has become a growing focus in recent research.

Two main approaches have emerged to address this challenge. The first employs statistical and heuristic techniques to infer explanations based on network's internal representations \citep{fel2022xplique}. \newtext{While these methods estimate feature importance, that require empirical evaluation (such as the $\mu$-Fidelity metric \citep{ijcai2020p417}),
}the second approach leverages automated reasoners and formal verification to provide provably correct explanations grounded in logical reasoning.

\newtext{We ground our work in the formal definition of Abductive Explanations (AXp) \citep{ignatiev2019abduction}, a concept belonging to the broader family of "formal XAI" which includes 
}
minimal explanations, also known as local-minimal, minimal unsatisfiable subsets \citep{marques2010minimal} and prime implicants \citep{shih2018symbolic}. 
\newtext{An AXp is a subset of features guaranteed to maintain the model's prediction under any perturbation within a defined domain.} In a machine learning context, \newtext{these explanations characterize feature sets where removing any single feature invalidates the guarantee, effectively representing subsets that preserve the decision's robustness.} 
However, a major hurdle for formal XAI is its high computational cost due to the complexity of reasoning, preventing it from scaling to large neural networks (NNs) \citep{marques2023logic}. This limitation, combined with the scarcity of open-source libraries, significantly hinders its adoption. Initial hybrid approaches, such as the EVA method \citep{Fel_2023_CVPR}, have attempted to combine formal and statistical methods, but these often fail to preserve the mathematical properties of the explanation. However, robustness-based approaches address the scalability challenges of formal XAI for NN by leveraging a fundamental connection between AXps and adversarial examples \citep{huang2023robustness}.

In this work, we present FAME, a scalable framework for formal XAI that addresses the core limitations of existing methods. Our contributions are fourfold:

\begin{itemize}
\item \textbf{Formal abstract explanations.} We introduce the first class of abductive explanations derived from abstract interpretation, enabling explanation algorithms to handle high-dimensional NNs.
\item \textbf{Eliminating traversal order.} We design perturbation domains and a recursive refinement procedure that leverage Linear Relaxation based Perturbation Analysis (LiRPA)-based certificates to simultaneously discard multiple irrelevant features. This removes the sequential bottleneck inherent in prior work and yields an abstract minimal explanation.
\item \textbf{Provable quality guarantees.} We provide the first procedure to measure the worst-case gap between abstract minimal explanations and true minimal abductive explanations, combining adversarial search with optional \gls{verixplus} refinement.
\item \textbf{Scalable evaluation.} We benchmark FAME on medium- and large-scale neural networks, 
showing consistent improvements in both explanation size and runtime over \gls{verixplus}. Notably, we produce the first abstract formal abductive explanations for a ResNet architecture on CIFAR-10, demonstrating scalability where exact methods become intractable. 
\end{itemize}


\section{Abductive Explanations \& Verification}
\subsection{Notations}
Scalars are denoted by lower-case letters (e.g., $x$), and the set of real numbers by $\mathbb{R}$. Vectors are denoted by bold lower-case letters (e.g., $\mathbf{x}$), and matrices by upper-case letters (e.g., $W$). The $i$-th component of a vector $\mathbf{x}$ (resp. line of a matrix $W$) is written as $\mathbf{x}_i$ (resp. $W_i$). The matrix $W^{\ge 0}$ (resp. $W^{\le 0}$) represents the same matrix with only nonnegative (resp. nonpositive) weights. Sets are written in calligraphic font (e.g., $\mathcal{S}$). We denote the perturbation domain by $\Omega$ and the property to be verified by $\mathrm{P}$.
%

\subsection{The Verification Context}
\label{sec:verif-context}

We consider a neural network as a function $f : \mathbb{R}^{n} \to \mathbb{R}^k$. The core task of verification is to determine whether the network's output $f(x')$ satisfies a given property $\mathrm{P}$ for every possible input $x'$ within a specified domain $\Omega(x) \subseteq \mathbb{R}^n$. When verification fails, it means there is at least one input $x'$ in the domain $\Omega(x)$ that violates the property $\mathrm{P}$ (a counterexample). 
The verification task can be written as: $
\forall x^{\prime} \in \Omega(x), \text{ does } f(x^{\prime}) \text{ satisfy } \mathrm{P}?
$
This requires defining two components:
\begin{enumerate}
    \item \textbf{The Perturbation Domain ($\Omega$)}: This domain defines the set of perturbations. It is often an $l_p$-norm ball around a nominal input $x$, such as an $l_\infty$ ball for modeling imperceptible noise: $\Omega = \{\mathbf{x'} \in \mathbb{R}^n \mid \|\mathbf{x} - \mathbf{x'}\|_\infty \leq \epsilon\}$.
    
    \item \textbf{The Property ($\mathrm{P}$)}: This is the specification the network must satisfy. For a classification task where the network correctly classifies an input $\mathbf{x}$ into class $c$, the standard robustness property $\mathrm{P}$ asserts that the logit for class $c$ remains the highest for any perturbed input $\mathbf{x'}$:
    \begin{equation}
        \mathrm{P}(\mathbf{x^{\prime}}) \equiv \min_{i \neq c} \left\{ f_c(\mathbf{x^{\prime}}) - f_i(\mathbf{x^{\prime}}) \right\} > 0
    \label{eq:verified-property}
    \end{equation}
\end{enumerate}

\newtext{For instance, given an MNIST image $x$ of a '7' and a perturbation radius $\epsilon$, the property $P$ holds if the network's logit for class '7' provably exceeds all other logits for every perturbed image $x' \in \Omega(x)$.}


A large body of work has investigated formal verification of NNs, with adversarial robustness being the most widely studied property~\citep{urban2021review}. Numerous verification tools are now available off-the-shelf, and for piecewise-linear models $f$ with corresponding input domains and properties, exact verification is possible~\citep{katz2017reluplex, lomuscio2023venus}. In practice, however, exact methods quickly become intractable for realistic networks. \newtext{To address this, we rely on Abstract Interpretation, a theory of sound approximation. Specifically, we utilize Linear Relaxation-based Perturbation Analysis (LiRPA) \citep{zhang2018efficient, singh2019abstract} which efficiently over-approximates the network’s output by enclosing it between linear upper and lower bounds.}
Such abstractions enable sound but conservative verification: if the relaxed property holds, the original one is guaranteed to hold as well. \newtext{We provide a comprehensive background 
in Appendix \ref{appendix-background}.}

\subsection{Abductive Explanations: Pinpointing the "Why"}
\textbf{Understanding Model Robustness with Formal Explanations: }
Neural networks often exhibit sensitivity to minor input perturbations, a phenomenon that certified training can mitigate but not eliminate \citep{de2024using}. Even robustly trained models may only have provably safe regions spanning a few pixels for complex tasks like ImageNet classification \citep{serrurier_achieving_2021}. To build more reliable systems, it is crucial to understand \textit{why} a model's prediction is robust (or not) within a given context. Formal explainability provides a rigorous framework for this analysis.

We focus on \emph{abductive explanations} (AXps, also called \gls{distanceAXp}) \citep{ignatiev2019abduction,huang2023robustness}, which identify a subset of input features that are \textit{sufficient} to guarantee that the property $\mathrm{P}$ holds. Formally, a local formal abductive explanation is defined as a subset of input features that, if collapsed to their nominal values (i.e., the sample \( \mathbf{x} \)), ensure that the local perturbation domain $\Omega$ surrounding the sample contains no counterexamples.


\begin{definition}[\gls{WAXp} \label{def:formal-xai}]
Formally, given a triple $(\mathbf{x}, \Omega, \mathrm{P})$,  
an \emph{explanation} is a subset of feature indices $\mathcal{X} \subseteq \mathcal{F}=\{1, \dots, n\}$ such that
\begin{equation}
\text{\gls{WAXp}:} \;  \forall \mathbf{x^{\prime}} \in \Omega(\mathbf{x}), \quad 
\left( \bigwedge_{i \in \mathcal{X}} (\mathbf{x^{\prime}}_i = \mathbf{x}_i) \right) 
\implies f(\mathbf{x'}) \models \mathrm{P}.
\label{eq:weakformal_xai}
\end{equation}
\end{definition}

While many such explanations may exist (the set of all features \(\mathcal{F}\) is a trivial one), the most useful explanations are the most concise ones~\citep{bassan2023towards}. We distinguish three levels:

\textbf{Minimal Explanation:} An explanation $\mathcal{X}$ is \emph{minimal} if removing any single feature from it would break the guarantee (i.e., $\mathcal{X} \setminus \{j\}$ is no longer an explanation for any $j \in \mathcal{X}$). These are also known as minimal unsatisfiable subsets\citep{ignatiev2016propositional,bassan2023towards}.
    
\textbf{Minimum Explanation:} An explanation $\mathcal{X}$ is \emph{minimum} if it has the smallest possible number of features (cardinality) among all possible minimal explanations. 

The concept of an abductive explanation is illustrated using a classification task (details in Appendix \ref{example-abductive}, Figure \ref{fig:toy_example_main}).  The goal is to find a minimal subset of fixed features ($\mathcal{X}$) that guarantees a sample's classification within its perturbation domain. For the analyzed sample, fixing $\mathbf{x}_2$ alone is insufficient due to the existence of a counterexample (Figure \ref{fig:abductive_example_main}). However, fixing the set $\mathcal{X}=\{\mathbf{x}_2, \mathbf{x}_3\}$ creates a 'safe' subdomain without counterexamples, confirming it is an abductive explanation. This explanation is minimal (neither $\mathbf{x}_2$ nor $\mathbf{x}_3$ work alone) but not minimum in cardinality, as $\mathcal{X}'=\{\mathbf{x}_1\}$ is also a valid minimal explanation. 
In the rest of this paper, we will use the terms abductive explanation or formal explanation and the notation \gls{WAXp} to refer to Definition ~\ref{def:formal-xai}.

\section{Related Work}

Substantial progress has been made in the practical efficiency of computing formal explanations. While finding an abductive explanation ($\AXp$) is tractable for some classifiers \citep{marques2023disproving, darwiche2022computation, huang2022tractable, huang2021efficiently, izza2020explaining, marques2020explaining, marques2021explanations}, it becomes computationally hard for complex models like random forests and neural networks \citep{ignatiev2021sat, izza2021explaining}. To address this inherent complexity, these methods typically encode the problem as a logical formula, leveraging automated reasoners like SAT, SMT, and Mixed Integer Linear Programming (MILP) solvers \citep{audemard2022trading, ignatiev2020towards, ignatiev2022using, ignatiev2021sat, izza2021explaining}
. Early approaches, such as deletion-based \citep{chinneck1991locating} and insertion-based \citep{de88jl} algorithms, are inherently sequential, thus requiring an ordering of the input features traditionally denoted as \textit{traversal ordering}. They require a number of costly verification calls linear with the number of features, which prevents effective parallelization. As an alternative, surrogate models have been used to compute formal explanations for complex models \citep{boumazouza2021asteryx,boumazouza2023symbolic}, but the guarantee does not necessary hold on the original model.

Recent work aims to break the sequential bottleneck, by linking explainability to adversarial robustness and formal verification. DistanceAXp \citep{huang2023robustness,la2021guaranteed} is a key example, aligning with our definition of $\AXp$ and enabling the use of verification tools.

The latest literature focuses on breaking the sequential bottleneck using several strategies that include parallelization. This is achieved either by looking for several counterexamples at once \citep{izza2024distance, bassan2023towards, la2021guaranteed, bassan2023formally} or by identifying a set of irrelevant features simultaneously, as seen in \gls{verix} \citep{wuverix}, \gls{verixplus} \citep{wu2024better}, and prior work \citep{bassan2023towards}. For instance, \gls{verixplus} introduced stronger traversal strategies to alleviate the sequential bottleneck. Their binary search approach splits the remaining feature set and searches for batches of consecutive irrelevant features, yielding the same result as sequential deletion but with fewer solver calls. They also adapted QuickXplain \citep{junker2004quickxplain}, which can produce even smaller explanations at the cost of additional runtime by verifying both halves. 
Concurrently, \citep{bassan2023towards} proposed strategies like the singleton heuristic to reuse verification results and derived provable size bounds, but their approach remains significantly slower than \gls{verixplus} and lacks publicly available code. 
 
The identified limitations are twofold. First, existing methods rely heavily on exact solvers such as Marabou \citep{katz2019marabou, wu2024marabou}, which do not scale to large NNs and are restricted to CPU execution. Recent verification benchmarks \citep{vnncomp2023, ducoffe2024surrogate, zhao2022cleverest} consistently demonstrate that GPU acceleration and distributed verification are indispensable for achieving scalability. Second, these approaches critically depend on traversal order. As shown in \gls{verix}, the chosen order of feature traversal strongly impacts both explanation size and runtime. Yet, determining an effective order requires prior knowledge of feature importance, precisely the information that explanations are meant to uncover, thus introducing a circular dependency.
Nevertheless, \gls{verixplus} currently represents the SOTA for abductive explanations in NNs, achieving the best trade-off between explanation size and computation time.

Our work builds on this foundation by directly addressing the sequential bottleneck of formal explanation without requiring a traversal order, a first in formal XAI. We demonstrate that leveraging incomplete verification methods and GPU hardware is essential for practical scalability. Our approach offers a new solution to the core scalability issues, complementing other methods that aim to reduce explanation cost through different means \citep{bassan2025explain, bassan2025explaining}.

\section{FAME: Formal Abstract Minimal Explanation}
\newtext{In this section, we introduce FAME, a framework that builds \emph{abstract abductive explanations} (Definition~\ref{def:formal-abstract-xai}). FAME  proposes novel strategies to provide sound abstract abductive explanations ($\wAXPA$) such as an Abstract Batch Certificate using Knapsack formulation, and a Recursive Refinement, relying on raw bounds provided by a formal framework (we use LiRPA in this paper). 
}

\begin{definition}[Abstract Abductive Explanation ($\wAXPA$)]\label{def:formal-abstract-xai}
Formally, given a triple $(\mathbf{x}, \Omega, \mathrm{P})$, an \emph{abstract abductive explanation} is a subset of feature indices 
$\mathcal{X}^A \subseteq \mathcal{F} = \{1, \dots, n\}$ such that, under an abstract interpretation $\overline{f}$ of the model $f$, the following holds:
\begin{equation}
\wAXPA: \forall \mathbf{x'} \in \Omega(\mathbf{x}), \quad 
\left( \bigwedge_{i \in \mathcal{X}^A} (\mathbf{x'}_i = \mathbf{x}_i) \right) 
\implies \overline{f}(\mathbf{x'}) \models \mathrm{P}.
\label{eq:formal_xai}
\end{equation}

\noindent Here, $\overline{f}=\text{LiRPA}(f, \Omega)$ denotes the sound over-approximated bounds of the model outputs on the domain $\Omega$, as computed by the LiRPA method.
If Eq.~\eqref{eq:formal_xai} holds, any feature outside $\mathcal{X}^A$ can be considered irrelevant \emph{with respect to the abstract domain}. This ensures that the concrete implication $f(\mathbf{x'}) \models \mathrm{P}$ also holds for all $x' \in \Omega$.  In line with the concept of abductive explanations, we define an \emph{abstract minimal explanation} as an abstract abductive explanation (\wAXPAmin) a set of features $\mathcal{X}^A$ from which no feature can be removed without violating Eq.~\eqref{eq:formal_xai}.
\end{definition}

Due to the over-approximation, as detailed in Section~\ref{sec:verif-context}, any \emph{abstract abductive explanation} is a \emph{weak abductive explanation} for the model $f$.
In the following we present the first steps described in Figure~\ref{fig:fameframework} to build such a $\wAXPA$.

\subsection{The Asymmetry of Parallel Feature Selection}
In the context of formal explanations, \textbf{adding a feature} means identifying it as essential to a model's decision (causes the model to violate the desired property $\mathrm{P}$), so its value must be fixed in the explanation.
Conversely, \textbf{freeing a feature} means identifying it as irrelevant, allowing it to vary without affecting the prediction. A key insight is the asymmetry between these two actions: while adding necessary features can be parallelized naively, freeing features cannot due to complex interactions.

\begin{proposition}
[\textbf{Simultaneous Freeing}]\label{lemma:simultaneous_free} 
it is unsound to free multiple features at once based only on individual verification as two features may be individually irrelevant yet jointly critical.
\end{proposition}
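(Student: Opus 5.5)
The statement is a non-soundness claim, so I will prove it with an explicit counterexample. I will exhibit a triple $(\mathbf{x}, \Omega, \mathrm{P})$ and an explanation $\mathcal{X}$ (take $\mathcal{X}=\mathcal{F}$, which is trivially a \gls{WAXp}) with two features $i,j \in \mathcal{X}$ such that both $\mathcal{X}\setminus\{i\}$ and $\mathcal{X}\setminus\{j\}$ satisfy Eq.~\eqref{eq:weakformal_xai}, yet $\mathcal{X}\setminus\{i,j\}$ does not. Individual verification calls would certify $i$ and $j$ as irrelevant, so freeing both simultaneously would return a set that is not an explanation. That is exactly the claimed unsoundness.

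\textbf{Construction.} I take $n=2$, $\mathcal{F}=\{1,2\}$, nominal input $\mathbf{x}=(0,0)$, and $\Omega(\mathbf{x})=\{\mathbf{x'} : \|\mathbf{x'}-\mathbf{x}\|_\infty\le 1\}$. For the classifier I use two logits with $f_c(\mathbf{x'}) - f_{c'}(\mathbf{x'}) = 1.5 - \mathbf{x'}_1 - \mathbf{x'}_2$, so that $\mathrm{P}$ in Eq.~\eqref{eq:verified-property} reads $\mathbf{x'}_1+\mathbf{x'}_2<1.5$. This margin function is affine, so any ReLU network realising it works. If a genuine ReLU nonlinearity is wanted, an XOR-like $\mathrm{ReLU}(\mathbf{x'}_1+\mathbf{x'}_2-1)$ construction also works.

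\textbf{Verification of the three cases.} I check each case directly.
\begin{itemize}
\item Fixing $\mathbf{x'}_2=0$ and freeing feature $1$: the maximum of $\mathbf{x'}_1+\mathbf{x'}_2$ is $1<1.5$, so $\{2\}$ is an explanation and feature $1$ is individually irrelevant.
\item By symmetry, $\{1\}$ is an explanation and feature $2$ is individually irrelevant.
\item Freeing both features: the point $\mathbf{x'}=(1,1)\in\Omega(\mathbf{x})$ gives $\mathbf{x'}_1+\mathbf{x'}_2=2\ge 1.5$, violating $\mathrm{P}$. Hence $\emptyset$ is not a \gls{WAXp}, and the two features are jointly critical.
\end{itemize}
To cover the abstract setting of Definition~\ref{def:formal-abstract-xai}, I will note that LiRPA is exact on affine functions. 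The same three facts therefore hold for $\overline{f}$, and the claim transfers to $\wAXPA$.

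\textbf{Contrast with adding features.} I will also briefly explain the asymmetry: if $\mathcal{X}\setminus\{j\}$ fails, then any subset of it also fails. This follows from the monotonicity of Eq.~\eqref{eq:weakformal_xai}, since fixing fewer features enlarges the constrained set. Necessity of single features can therefore be established in parallel, whereas sufficiency is not preserved under unions of freed features.

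There is no real obstacle here. The only care needed is choosing the threshold strictly between the single-feature maximum ($1$) and the joint maximum ($2$), and confirming that the perturbation domain contains the corner point $(1,1)$.
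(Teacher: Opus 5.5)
Your proof is correct. Take $\mathbf{x}=(0,0)$, the unit $\ell_\infty$ ball, and the margin $1.5-\mathbf{x'}_1-\mathbf{x'}_2$. Freeing either feature alone leaves a margin of at least $0.5>0$, while the corner $(1,1)$ drives the margin to $-0.5$. Because the margin is affine, the CROWN bounds are exact, so in the notation of Definition~\ref{def:batch-certificate} you get $\Phi(\{1\})=\Phi(\{2\})=-0.5\le 0$ but $\Phi(\{1,2\})=0.5>0$.

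This is a genuinely different route from the paper's. The appendix proof labelled Simultaneous freeing does not actually exhibit unsoundness. Its text restates the addition argument: features that are each individually necessary can be added to $\mathcal{X}$ together. The unsoundness claim itself is supported only by the schematic toy picture in Figure~\ref{fig:parallel_add_free}(b), with no concrete model or numbers.

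Your explicit, checkable instance is what the statement actually requires. Your monotonicity remark is the precise content the paper's written argument gestures at: fixing fewer features only enlarges the constrained set, so a counterexample for $\mathcal{X}\setminus\{j\}$ persists for every subset of it.

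What each approach buys is different. The paper's figure offers geometric intuition in its 3D toy setting. Yours is a rigorous witness that also transfers to the abstract setting. It also shows concretely why the paper's remedy works: $\Phi$ sums nonnegative per-feature contributions $c_{i,j}$, so it refuses exactly the joint freeing that two separate individual checks would have permitted.
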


\newtext{Parallelizing feature freeing based on individual verification queries is unsound due to hidden feature dependencies that stem from treating the verifier as a simple binary oracle (SAT/UNSAT; see Appendix~\ref{appendix-background} for formal definitions)
(Proposition~\ref{lemma:simultaneous_free}). To solve this, we introduce the Abstract Batch Certificate $\Phi(\mathcal{A})$ (Definition~\ref{def:batch-certificate}). Unlike naive binary checks, $\Phi(\mathcal{A})$ leverages abstract interpretation to compute a joint upper bound on the worst-case contribution of the entire set $\mathcal{A}$ simultaneously. If $\Phi(\mathcal{A}) \le 0$, it mathematically guarantees that simultaneously freeing $\mathcal{A}$ is sound, explicitly accounting for their combined interactions.} The formal propositions detailing this asymmetry 
is provided in the Appendix \ref{appendix-proof}.


\subsection{Abstract Interpretation for Simultaneous Freeing}
\label{sec:abstract-simult-freeing} 
Standard solvers act as a "binary oracle" and their outcomes (SAT/UNSAT) are insufficient to certify batches of features for freeing without a traversal order. This is because of feature dependencies and the nature of the verification process. 
We address this by leveraging \emph{inexact} verifiers based on abstract interpretation (LiRPA) to extract \emph{proof objects} (linear bounds) that conservatively track the contribution of any feature set. Specifically, we use CROWN \citep{zhang2018efficient} to define an \emph{abstract batch certificate} $\Phi$  in Definition~\ref{def:batch-certificate}. 
If one succeeds in freeing a set of features $\mathcal{A}$ given $\Phi$, we denote such an explanation as a \emph{formal abstract explanation} that satisfies Proposition~\ref{lemma:batch-certificate}. 

\begin{definition}[Abstract Batch Certificate]\label{def:batch-certificate}
Let $\mathcal{A}$ be a set of features and $\Omega$ any perturbation domain. The \emph{abstract batch certificate} is defined as:
\[
\Phi(\mathcal{A}; \Omega) \;=\; \max_{i \neq c}\Big(\overline{b}^i(x) + \sum_{j \in \mathcal{A}} c_{i,j}\Big),
\]

where the baseline bias (worst-case margin of the model's output) at $x$ is
$
\overline{b}^i(x) = \overline{W}^i \cdot x + \overline{w}^i,
$

and the contribution of each feature $j \in \mathcal{A}$ is
$
c_{i,j} \;=\; \max\Big\{\overline{W}^{i,\ge 0}_j\,(\overline{x}_j-x_j),\;
                 \overline{W}^{i,\le 0}_j\,(\underline{x}_j-x_j)\Big\},
$
with 
\(\overline{x}_j = \max\{x'_j : x' \in \Omega(x)\}\) and
\(\underline{x}_j = \min\{x'_j : x' \in \Omega(x)\}\).  
The weights $\overline{W}^i$ and biases $\overline{w}^i$ are obtained from LiRPA bounds, which guarantee for each target class $i\neq c$, with $c$ being the groundtruth class:
\[
\forall x' \in \Omega(x), \quad
f_i(x') - f_c(x') \le \overline{f}_{i,c}(x') \;=\; \overline{W}^i \cdot x' + \overline{w}^i,
\]

\end{definition}

\begin{proposition}[\textbf{Batch-Certifiable Freeing}]\label{lemma:batch-certificate}
If $\Phi(\mathcal{A}; \Omega) \le 0$, then $\mathcal{F} \setminus \mathcal{A}$ is a weak abductive explanation (\gls{WAXp}).
\end{proposition}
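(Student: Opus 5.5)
We unfold Definition~\ref{def:formal-xai} for $\mathcal{X}=\mathcal{F}\setminus\mathcal{A}$. Fix any $x'\in\Omega(x)$ with $x'_k=x_k$ for all $k\notin\mathcal{A}$; the goal is $\mathrm{P}(x')$, i.e.\ $f_c(x')-f_i(x')>0$ for every $i\neq c$. The LiRPA guarantee in Definition~\ref{def:batch-certificate} holds on all of $\Omega(x)$, so in particular at $x'$:
\[
f_i(x')-f_c(x')\le \overline{W}^i\cdot x'+\overline{w}^i .
\]
The idea is to expand this linear bound around the nominal point $x$ and control the deviation coordinate by coordinate.

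First write $\overline{W}^i\cdot x'+\overline{w}^i=\overline{b}^i(x)+\sum_{j}\overline{W}^i_j(x'_j-x_j)$. The terms with $j\notin\mathcal{A}$ vanish because $x'_j=x_j$, leaving only the sum over $j\in\mathcal{A}$. Next, bound each remaining term by $c_{i,j}$. Since $x'\in\Omega(x)$, we have $\underline{x}_j\le x'_j\le\overline{x}_j$, and also $\underline{x}_j\le x_j\le\overline{x}_j$ (assuming $x\in\Omega(x)$, which holds for the perturbation domains considered). Hence $x'_j-x_j\in[\underline{x}_j-x_j,\overline{x}_j-x_j]$, an interval containing $0$.

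A linear function $w\,t$ on this interval is maximized at $\overline{x}_j-x_j$ when $w\ge 0$ and at $\underline{x}_j-x_j$ when $w\le 0$. This yields $\overline{W}^i_j(x'_j-x_j)\le \max\{\overline{W}^{i,\ge0}_j(\overline{x}_j-x_j),\,\overline{W}^{i,\le0}_j(\underline{x}_j-x_j)\}=c_{i,j}$. In this expression one of the two entries is the true maximum and the other is $0$, using the sign-split notation and $\overline{x}_j-x_j\ge0\ge\underline{x}_j-x_j$. This sign-case analysis is the only step needing care, and I expect it to be the main (though mild) obstacle.

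Summing gives $f_i(x')-f_c(x')\le \overline{b}^i(x)+\sum_{j\in\mathcal{A}}c_{i,j}\le\Phi(\mathcal{A};\Omega)\le 0$ for every $i\neq c$.

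The remaining subtlety is strictness: $\mathrm{P}$ in Eq.~\eqref{eq:verified-property} requires a margin $>0$, whereas the hypothesis gives only $\le 0$. I would handle this by reading the certificate condition as $\Phi<0$, or by adopting the non-strict convention (ties broken toward $c$). Note that the strict version $\Phi<0$ directly yields $\min_{i\neq c}(f_c-f_i)>0$. With this remark, $\mathrm{P}(x')$ holds for every admissible $x'$, so $\mathcal{F}\setminus\mathcal{A}$ is a \gls{WAXp}. Since the argument only used the linear over-approximation, it also shows $\mathcal{F}\setminus\mathcal{A}$ is a $\wAXPA$ in the sense of Definition~\ref{def:formal-abstract-xai}.
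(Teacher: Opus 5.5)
Your proof is correct and follows the same route as the paper's: expand the LiRPA upper bound around $x$, observe that the coordinates outside $\mathcal{A}$ contribute nothing, bound each remaining term $\overline{W}^i_j(x'_j-x_j)$ by $c_{i,j}$, and sum to get $f_i(x')-f_c(x')\le\Phi(\mathcal{A};\Omega)\le 0$. You simply make explicit the sign analysis that the paper compresses into ``$\Delta_{i,j}(x')\le c_{i,j}$''; your strictness remark is also well taken, since the paper's ``precluding a label flip'' glosses over the fact that $\Phi\le 0$ only gives $f_c-f_i\ge 0$ rather than the strict margin required by $\mathrm{P}$ (and, as a small aside, the assumption $x\in\Omega(x)$ is not needed, because $c_{i,j}$ is a maximum and already dominates the sign-matched term in either case).
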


\begin{lemma}
If $\Phi(\mathcal{A}) \le 0$, freeing all features in $\mathcal{A}$ is sound; that is, the property $\mathrm{P}$ holds for every $x' \in \Omega(x)$ with \(\{x'_k = x_k\}_{k \in \mathcal{F} \setminus \mathcal{A}}\). 
\end{lemma}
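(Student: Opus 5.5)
The plan is a direct pointwise bound. Fix any $x' \in \Omega(x)$ with $x'_k = x_k$ for all $k \in \mathcal{F}\setminus\mathcal{A}$, and fix any target class $i \neq c$. We must show $f_c(x') - f_i(x') > 0$, or at least $\ge 0$, with the strict/non-strict issue discussed below. By the LiRPA guarantee in Definition~\ref{def:batch-certificate}, $f_i(x') - f_c(x') \le \overline{W}^i \cdot x' + \overline{w}^i$, so it suffices to bound this linear function above by $\overline{b}^i(x) + \sum_{j\in\mathcal{A}} c_{i,j}$.

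First decompose around the nominal point:
\[
\overline{W}^i\cdot x' + \overline{w}^i = \overline{b}^i(x) + \sum_{j\in\mathcal{F}} \overline{W}^i_j (x'_j - x_j).
\]
Since $x'_j = x_j$ for $j\notin\mathcal{A}$, only the terms with $j\in\mathcal{A}$ survive. For each $j\in\mathcal{A}$ we have $\underline{x}_j \le x'_j \le \overline{x}_j$ by definition of these bounds over $\Omega(x)$, and $\underline{x}_j \le x_j \le \overline{x}_j$ since $x\in\Omega(x)$.

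The per-coordinate step splits on the sign of $\overline{W}^i_j$:
\begin{itemize}
\item If $\overline{W}^i_j \ge 0$, then $\overline{W}^i_j(x'_j - x_j) \le \overline{W}^{i,\ge 0}_j(\overline{x}_j - x_j)$.
\item If $\overline{W}^i_j \le 0$, then $\overline{W}^i_j(x'_j - x_j) \le \overline{W}^{i,\le 0}_j(\underline{x}_j - x_j)$.
\end{itemize}
In either case the term is at most $c_{i,j}$. The $\max$ in $c_{i,j}$ covers both cases, because the inactive branch equals $0$ while the active one is $\ge 0$.

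Summing gives $f_i(x') - f_c(x') \le \overline{b}^i(x) + \sum_{j\in\mathcal{A}} c_{i,j} \le \Phi(\mathcal{A};\Omega) \le 0$. Taking the max over $i\neq c$ yields $\min_{i\neq c}(f_c(x') - f_i(x')) \ge 0$.

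The main obstacle is the boundary case. Property~\eqref{eq:verified-property} demands strict positivity, but $\Phi\le 0$ only delivers $\ge 0$. I would handle this either by reading the certificate condition as $\Phi<0$, or by adopting the usual convention that ties are resolved in favor of $c$ (equivalently, a non-strict version of $\mathrm{P}$). I would state explicitly which convention is used.

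Two further points need care. One is that the LiRPA bound must be computed on the full domain $\Omega$ (or any superset of the restricted domain), so that it remains valid on the subset where the coordinates outside $\mathcal{A}$ are fixed. The other is that $x\in\Omega(x)$, which ensures the $c_{i,j}$ are nonnegative and the case split is exhaustive.

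With this lemma, Proposition~\ref{lemma:batch-certificate} follows immediately, since the conclusion is exactly Definition~\ref{def:formal-xai} with $\mathcal{X}=\mathcal{F}\setminus\mathcal{A}$.
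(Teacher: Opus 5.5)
Your proposal is correct and follows the same route as the paper's proof. You decompose the LiRPA upper bound $\overline{W}^i\cdot x'+\overline{w}^i$ around the nominal point as $\overline{b}^i(x)$ plus per-feature deviations, bound each free deviation by $c_{i,j}$ (your sign case split is exactly what the paper compresses into $\Delta_{i,j}(x')\le c_{i,j}$), sum, and maximize over $i\neq c$.

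Your remark that $\Phi(\mathcal{A})\le 0$ only yields $f_c(x')-f_i(x')\ge 0$, while $\mathrm{P}$ in Eq.~\eqref{eq:verified-property} is strict, is a genuine point that the paper's proof skips over by saying ``precluding a label flip.'' Fixing it either by requiring $\Phi<0$ or by adopting a non-strict, tie-breaking convention is the right resolution.
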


The proof of Proposition~\ref{lemma:batch-certificate} is given in Appendix \ref{appendix-proof}. 
The trivial case $\mathcal{A} = \emptyset$ always satisfies the certificate, but our goal is to efficiently certify large feature sets. The abstract batch certificate also highlights two extreme scenarii. In the first, if $\Phi(\mathcal{F}) \le 0$, all features are irrelevant, meaning the property $\mathrm{P}$ holds across $\Omega$ without fixing any inputs. In the second, if $\overline{b}^i(x) \ge 0$ for some $i \neq c$, then $\Phi(\emptyset) > 0$ and no feature can be safely freed; this situation arises when the abstract relaxation is too loose, producing vacuous bounds. Avoiding this degenerate case requires careful selection of the \emph{perturbation domain}, a consideration we highlight for the first time in the context of abductive explanations. The choice of abstract domain is discussed in Section~\ref{sec:perturbation_domain}.

\subsection{Minimizing the Size of an Abstract Explanation via a Knapsack Formulation}
Between the trivial and degenerate cases lies the nontrivial setting: finding a maximal \emph{set of irrelevant features} $\mathcal{A}$ to free given the abstract batch certificate $\Phi$. Let $\mathcal{F}$ denote the index set of features. Maximizing $|\mathcal{A}|$ can be naturally formulated as a $0/1$ Multidimensional Knapsack Problem (MKP). For each feature $j \in \mathcal{F}$, we introduce a binary decision variable $y_j$ indicating whether the feature is selected. The optimization problem then reads:
\begin{equation}\label{eq:RP}
\max_{y} \sum_{j \in \mathcal{F}} y_j 
\;\text{s.t.}\; 
\sum_{j \in \mathcal{F}} c_{ij} y_j \le -\overline{b}^i(x), \quad i \in I, \, i \neq c
\end{equation}
where $c_{i,j}$ represents the contribution of feature $j$ to constraint $i$, and $-\overline{b}^i(x)$ is the corresponding knapsack capacity. The complexity of this MKP depends on the number of output classes. For binary classification ($k=2$), the problem is linear\footnotemark \footnotetext{it can be solved optimally in $\mathcal{O}(n)$ time by sorting features by ascending contribution $c_{1,j}$ and greedily adding them until the capacity is exhausted.}. In the standard multiclass setting ($k>2$), however, the MKP is NP-hard. While moderately sized instances can be solved exactly using a MILP solver, this approach does not scale to large feature spaces. To ensure scalability, we propose a simple and efficient greedy heuristic, formalized in Algorithm~\ref{alg:abstract-batch}. Rather than solving the full MKP, the heuristic iteratively selects the feature $j^\star$ that is \emph{least likely} to violate any of the $k-1$ constraints, by minimizing the maximum normalized cost across all classes. An example is provided in Appendix \ref{exp:greedy-knapsack}. This procedure is highly parallelizable, since all costs can be computed simultaneously. While suboptimal by design, it produces a set $\mathcal{A}$ such that $\Phi(\mathcal{A}; \Omega) \leq 0$. 
\newtext{A key advantage of this greedy batch approach is its computational efficiency. The cost is dominated by the computation of feature contributions $c_{i,j}$. This requires a single backward pass through the abstract network, which has a complexity of $O(L \cdot N)$ (where $L$ is depth and $N$ is neurons) and is highly parallelizable on GPUs. In contrast, exact solvers require solving an NP-hard problem for each feature or batch.} 
In Section~\ref{sec:experiments}, we compare the performance of this greedy approach against the optimal MILP solution, demonstrating that it achieves competitive results with dramatically improved scalability.

\begin{algorithm}[H]
\caption{Greedy Abstract Batch Freeing (One Step)}\label{alg:abstract-batch}
\begin{algorithmic}[1]
\State \textbf{Input:} model $f$, perturbation domain $\Omega^m$, candidate set $F$
\State \textbf{Initialize:} $\mathcal{A}\gets\emptyset$, linear bounds $\{\overline{W}^i,\overline{w}^i\}=\text{LiRPA}(f, \Omega^m(x))$
\State \textbf{Do:} compute $c_{i,j}$ in parallel
\While{$\Phi(\mathcal{A})\le 0$ and $|\mathcal{F}| >0$}
    \State pick $j^\star = \arg\min_{j\in F\setminus \mathcal{A}} \max_{i\ne c}\, c_{i,j}/(-\overline{b}_i)$ \Comment{Parallel reduction}
    \If{$\Phi(\mathcal{A}\cup\{j^\star\}) \le 0$ and $|\mathcal{A}|\le m$} \State $\mathcal{A}\gets \mathcal{A}\cup\{j^\star\}$
    \EndIf
    \State $F \gets F \setminus \{j^\star\}$ \Comment{Remove candidate}
\EndWhile
\State \textbf{Return:} $\mathcal{A}$
\end{algorithmic}
\end{algorithm}

\section{Refining the Perturbation Domain for Abductive Explanation}\label{sec:perturbation_domain}

Previous approaches for batch freeing reduce the perturbation domain using a traversal order $\pi$, defining
$\Omega_{\pi,i}(\mathbf{x}) = \{\mathbf{x'} \in \mathbb{R}^n : \|\mathbf{x} - \mathbf{x'}\|_\infty \le \epsilon, \; \mathbf{x'}_{\pi_{i:}} = \mathbf{x}_{\pi_{i:}}\}.
$
These methods only consider freeing dimensions up to a certain order. However, as discussed previously, determining an effective order requires prior knowledge of feature importance, the very information that explanations aim to uncover, introducing a circular dependency. This reliance stems from the combinatorial explosion: the number of possible subsets of input features grows exponentially, making naive enumeration of abstract domains intractable.

To address this, we introduce a new perturbation domain, denoted the \emph{cardinality-constrained perturbation domain}. For instance, one can restrict to $\ell_0$-bounded perturbations:
\[
\Omega^m(\mathbf{x}) = \{\mathbf{x'} \in \mathbb{R}^n : \|\mathbf{x} - \mathbf{x'}\|_\infty \le \epsilon, \; \|\mathbf{x} - \mathbf{x'}\|_0 \le m\},
\] 
which ensures that at most $m$ features may vary simultaneously. 
This concept is closely related to the $\ell_0$ norm and has been studied in verification \citep{xu2020automatic}, but, to the best of our knowledge, it is applied here for the first time in the context of abductive explanations. The greedy procedure in Algorithm~\ref{alg:abstract-batch} can then certify a batch of irrelevant features $\mathcal{A}$ under this domain. Once a set $\mathcal{A}$ is freed, the feasible perturbation domain becomes strictly smaller, enabling tighter bounds and the identification of additional irrelevant features. We formalize this as the \emph{refined abstract domain} that ensures that at most $m$ features can vary in addition to the set of previously seclected ones $\mathcal{A}$:
\begin{equation}
\label{eq:refined-domain}
\Omega^m(\mathbf{x}; \mathcal{A}) = \{\mathbf{x'} \in \mathbb{R}^n : \|\mathbf{x} - x'\|_\infty \le \epsilon, \; \|\mathbf{x}_{\mathcal{F}\setminus \mathcal{A}} - \mathbf{x'}_{\mathcal{F}\setminus \mathcal{A}}\|_0 \le m\}.
\end{equation}
By construction, $\Omega^m(\mathbf{x}; \mathcal{A}) \subseteq \Omega^{m+|\mathcal{A}|}(\mathbf{x})$, 
so any free set derived from $\Omega^m(x; \mathcal{A})$ remains sound for the original budget $m+|\mathcal{A}|$. 
Recomputing linear bounds on this tighter domain often yields strictly smaller abstract explanation. This refinement naturally suggests a recursive strategy: after one round of greedy batch freeing, we restrict the domain to $\Omega^m(\mathbf{x}; \mathcal{A})$, recompute LiRPA bounds, and reapply Algorithm~\ref{alg:abstract-batch} for $m = 1 \dots |\mathcal{F}\setminus \mathcal{A}|$. 
\newtext{Unlike the static traversal of prior work (e.g., VERIX+), FAME employs a dynamic, cost-based selection by re-evaluating abstract costs $c_{i,j}$ at each recursive step. This process functions as an adaptive abstraction mechanism: iteratively enforcing cardinality constraints tightens the domain, reducing LiRPA's over-approximation error and enabling the recovery of additional freeable features initially masked by loose bounds.} 
As detailed in Algorithm~\ref{alg:recursive}, this process can be iterated, progressively shrinking the domain and expanding $\mathcal{A}$. In practice, recursion terminates once no new features can be freed. Finally, any remaining candidate features can be tested individually using the binary search approach proposed by VeriX+ but replacing Marabou by CROWN (see Algorithm~\ref{alg:singleton_free}). This final step ensures that we obtain a formal abstract minimal explanation, as defined in Definition~\ref{def:formal-abstract-xai}

\begin{algorithm}[H]
\caption{Recursive Abstract Batch Freeing}\label{alg:recursive}
\begin{algorithmic}[1]
\State \textbf{Input:} model $f$, input $x$, candidate set $\mathcal{F}$
\State \textbf{Initialize:} $\mathcal{A} \gets \emptyset$ \Comment{certified free set}
\Repeat
    \State $\mathcal{A}_{best} \gets \emptyset$
    \For{$m = 1 \dots |\mathcal{F} \setminus \mathcal{A}|$}
        \State $\mathcal{A}_m \gets$ \Call{GreedyAbstractBatchFreeing}{$f, \Omega^m(x; \mathcal{A}), \mathcal{F}\setminus\mathcal{A}$}
        \If{$|\mathcal{A}_m| > |\mathcal{A}_{best}|$}
            \State $\mathcal{A}_{best} \gets \mathcal{A}_m$
        \EndIf
    \EndFor
    \State $\mathcal{A} \gets \mathcal{A} \cup \mathcal{A}_{best}$
\Until{$\mathcal{A}_{best} = \emptyset$}
\State $\mathcal{A}$ = \Call{Iterative Singleton Free}{f, x, $\mathcal{F}$, $\mathcal{A}$}\Comment{refine by testing remaining features}
\State \textbf{Return:} $\mathcal{A}$
\end{algorithmic}
\end{algorithm}

\section{Distance from Abstract Explanation to Minimality}
\label{sec:distance-to-minimality}
Algorithm~\ref{alg:recursive} returns a \emph{minimal abstract explanation}: with respect to the chosen LiRPA relaxation, the certified free set $\mathcal{A}$ cannot be further enlarged. This guarantee is strictly weaker than minimality in the exact sense. The remaining features may still include irrelevant coordinates that abstract interpretation fails to certify, due to the coarseness of the relaxation. In other words, minimality is relative to the verifier: stronger but more expensive verifiers (e.g., Verix+ with Marabou) are still required to converge to a \emph{true minimal explanation}.

\newtext{We achieve this via a two-phase pipeline (Figure \ref{fig:fameframework}). \textbf{Phase 1 (Abstract Pruning)} generates a sound abstract explanation \wAXPAmin. \textbf{Phase 2 (Exact Refinement)} minimizes this candidate using VERIX+, ensuring the final output is guaranteed minimal.} 
The gap arises from the tradeoff between verifier accuracy and domain size. Abstract methods become more conservative as the perturbation domain grows, while exact methods remain sound but scale poorly. This motivates hybrid strategies that combine fast but incomplete relaxations with targeted calls to exact solvers. As an additional acceleration step, adversarial attacks can be used. By Lemma~\ref{lemma:simultaneous_add}, if attacks identify features that must belong to the explanation, they can be added simultaneously (\textit{see Algorithm~\ref{alg:batch_add}}). Unlike abstract interpretation, the effectiveness of adversarial search typically increases with the domain size: larger regions make it easier to find counterexamples.

\paragraph{Towards minimal explanations.}
\newtext{In formal XAI, fidelity is a hard constraint guaranteed by the verifier. Therefore, the explanation cardinality (minimality) becomes the only metric to compare formal abductive explanations. A smaller explanation is strictly better, provided it remains sufficient.} Our strategy is to use the \textit{minimal abstract explanation} (\wAXPAmin) as a starting point, and then search for the closest minimal explanation. Concretely, we aim to identify the largest candidate set of potentially irrelevant features that, if freed together, would allow all remaining features to be safely added to the explanation at once.  
A good traversal order of the candidate space is crucial here, as it determines how efficiently such irrelevant features can be pinpointed. Formally, if $\mathcal{X}^A$ denotes the minimal abstract explanation and $\mathcal{X}^{A^\star}$ the closest minimal explanation, we define the \emph{absolute distance to minimality} as the number of irrelevant features not captured by the abstract method:
$
d(\mathcal{X}^A, \mathcal{X}^{A^\star}) \;=\; \big|\mathcal{X}^A \setminus \mathcal{X}^{A^\star}\big|.
$

\section{Experiments}\label{sec:experiments}
To evaluate the benefits and reliability of our proposed explainability method, FAME, we performed a series of experiments comparing its performance against the SoTA \gls{verixplus} implementation. We assessed the quality of the explanations generated by FAME by comparing them to those of \gls{verixplus} across four distinct models, including both fully connected and convolutional neural networks (CNNs). We considered two primary performance metrics: the runtime required to compute a single explanation and the size (cardinality) of the resulting explanation. 

Our experiments, as in \gls{verixplus} \citep{wu2024better}, were conducted on two widely-used image classification datasets: MNIST \citep{yann2010mnist} and GTSRB \citep{stallkamp2012man}. Each score was averaged over non-robust samples from the 100 samples of each dataset. For the comparison results, the explanations were generated using the FAME framework only, and with a final run of  \gls{verixplus} to ensure minimality (See Figure \ref{fig:fameframework}).


\begin{table}[h!]
\centering
\resizebox{\linewidth}{!}{

\begin{tabular}{l|ll|ll|ll|ll|ll|lll}
\hline
\multicolumn{1}{c|}{} & \multicolumn{2}{c|}{\gls{verixplus} (alone) } & \multicolumn{4}{c|}{\makecell{FAME: Single-round}} & \multicolumn{4}{c|}{\makecell{FAME: Iterative refinement}} & \multicolumn{3}{c}{FAME-accelerated \gls{verixplus}} \\
\rule{0pt}{1.05em}
\makecell{Traversal order} & \multicolumn{2}{c|}{\makecell{bounds}} &  \multicolumn{4}{c|}{/} & \multicolumn{4}{c|}{/} & \multicolumn{3}{c}{\makecell{/ + bounds}} \\
\rule{0pt}{1.em}
\makecell{Search procedure} & \multicolumn{2}{l|}{\makecell{binary}} & \multicolumn{2}{l|}{MILP} & \multicolumn{2}{l|}{Greedy} & \multicolumn{2}{l|}{MILP} & \multicolumn{2}{l|}{Greedy} & \multicolumn{3}{l}{\makecell{Greedy + binary}} \\
\rule{0pt}{1.em}
\makecell{Metrics \textdownarrow} & $|$\textit{AXp}$|$ & time & $|$\textit{$\wAXPA$}$|$  & time   &  $|$\textit{$\wAXPA$}$|$  & time  &  $|$\textit{$\wAXPA$}$|$  & time  & $|$\textit{$\wAXPA$}$|$  & time  & $\|$candidate-set$\|$ & $|\textit{AXp}|$  & time \\
\hline
\hline
\rule{0pt}{1.05em}
\textbf{MNIST-FC} & 280.16&13.87 &441.05&4.4 &448.37 &0.35 &229.73 & 14.30& 225.14& 8.78&44.21 &\textbf{224.41} &  \textbf{13.72}\\
\rule{0pt}{1.05em}
\textbf{MNIST-CNN} &159.78 & 56.72& 181.24& 5.59&190.29 &0.51 & 124.9& 12.35& 122.09&5.6& 104.09& \textbf{113.53}&\textbf{33.75} \\
\rule{0pt}{1.05em}
\textbf{GTSRB-FC} & \textbf{313.42}& 56.18& 236.85& 9.68&243.18 &0.97 &331.84 & 12.28& 332.74& 5.26& 11.93& {332.66}&\textbf{9.26} \\
\rule{0pt}{1.05em}
\textbf{GTSRB-CNN} & 338.28& 185.03&372.66 &12.45 & 379.34&1.35 &322.42& 17.63& 322.42& 7.42 & 219.57&\textbf{322.42} & \textbf{138.12}\\
\hline
\end{tabular}

}
 \caption{Average explanation size and generation time (in seconds) are compared for FAME (single-round and iterative MILP/Greedy) 
with FAME-accelerated \gls{verixplus} to achieve minimality. 
}\label{tab:summary-results}
\end{table}



\textbf{Experimental Setup}
All experiments were carried out on a machine equipped with an Apple M2 Pro processor and $16$~GB of memory. The analysis is conducted on fully connected (-FC) and convolutional (-CNN) models from the MNIST and GTSRB datasets, with $\epsilon$ set to 0.05 and 0.01 respectively. The verified perturbation analysis was performed using the DECOMON library\footnote{https://github.com/airbus/decomon}, applying the CROWN method with an $l_{\infty}$-norm. The NN verifier Marabou \citep{katz2019marabou} is used within \gls{verixplus}. \newtext{We included a sensitivity analysis covering: (1) Solver Choice, confirming the Greedy heuristic's near-optimality vs. MILP (Table \ref{tab:summary-results}); (2) Cardinality Constraints, showing that card=True yields significantly smaller explanations (Figure \ref{fig:fame_verix_comparison}); and (3) Perturbation Magnitude ($\epsilon$), which we fixed to baseline used by \gls{verixplus} for direct comparison.} 
\newtext{We include additional experimental results on the ResNet-2B architecture (CIFAR-10) from the VNN-COMP benchmark \citep{wang2021betacrown} to demonstrate scalability on deeper models.
} The complete set of hyperparameters and the detailed architectures of the models used 
are provided in Appendix \ref{appendix-xps} for full reproducibility.

\begin{figure*}[t]
    \centering
    \begin{minipage}[c]{0.48\textwidth}
        \centering
        \includegraphics[width=0.93\textwidth]{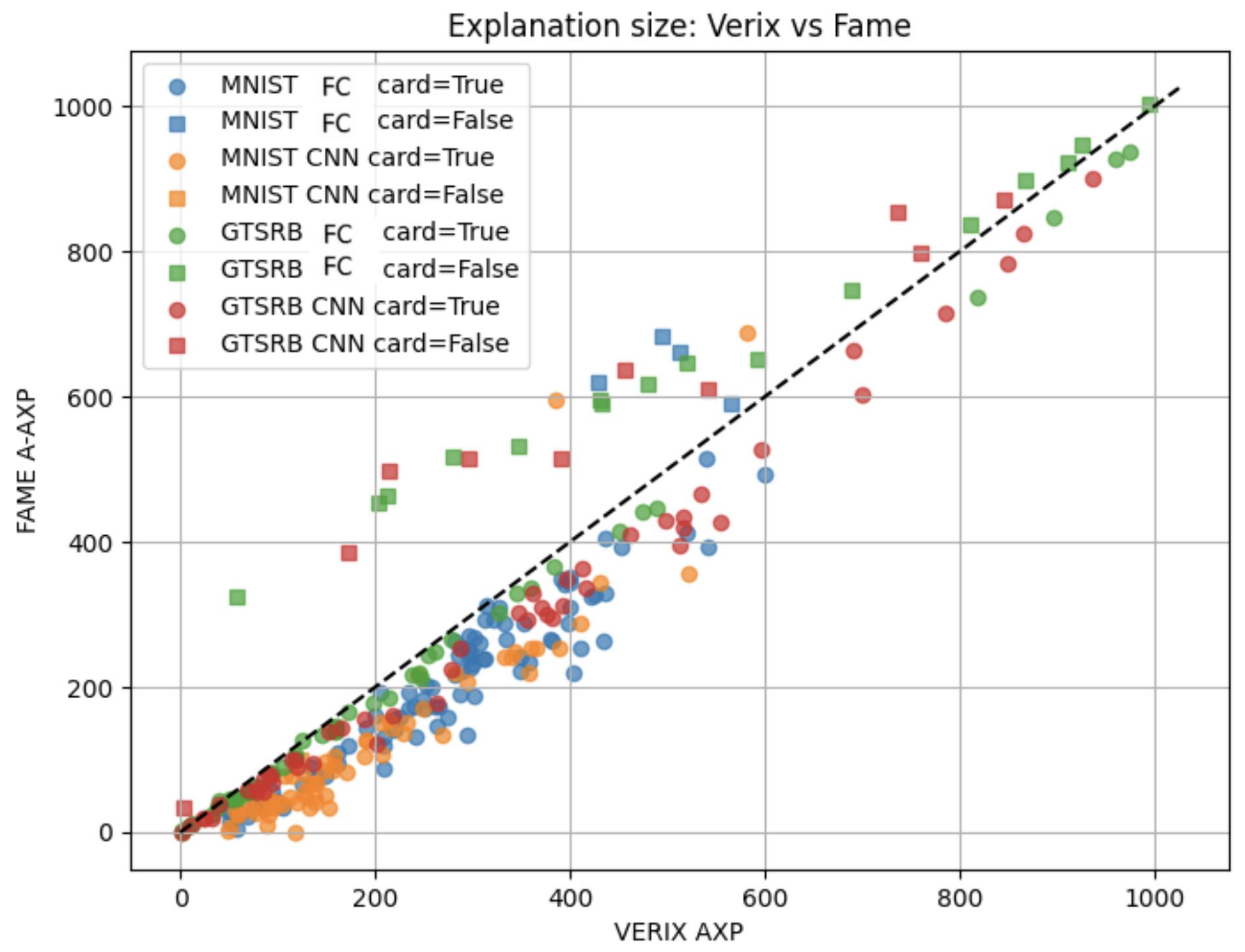}
        \label{fig:size_comparison}
    \end{minipage}
    \begin{minipage}[c]{0.48\textwidth}
        \centering
        \includegraphics[width=0.9\textwidth]{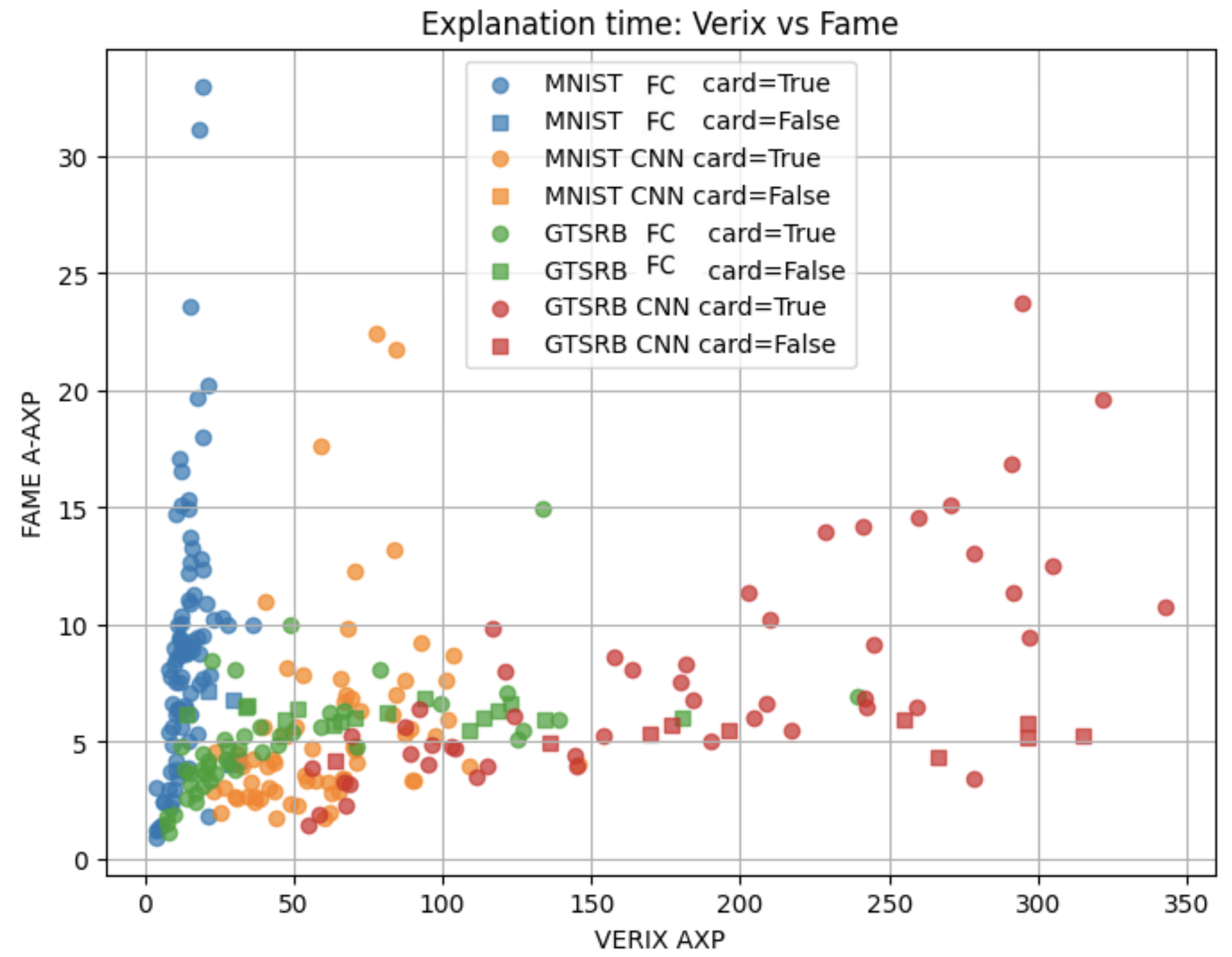}
        \label{fig:time_comparison}
    \end{minipage}
    \caption{ \textbf{FAME's iterative refinement approach against the \gls{verixplus} baseline}. 
    The left 
    plot compares the size of the final explanations. The right
    plot compares the runtime (in $seconds$).  The data points for each model are distinguished by color, and the use of circles (card=True) and squares (card=False) indicates whether a cardinality constraint ($||x-x'||_{0}\le m$) was applied.
        }
    \label{fig:fame_verix_comparison}
\end{figure*}

\subsection{Greedy vs. MILP for Abstract Batch Freeing}

\textbf{Performance in a Single Round} 
This experiment, in the 'FAME: Single Round' column of Table \ref{tab:summary-results}, compares the runtime and size of the largest free set obtained in a single round using the greedy method versus an exact MILP solver for the abstract batch freeing (Algorithm~\ref{alg:abstract-batch}).

Across all models, the greedy heuristic consistently provided a significant \textbf{speedup (ranging from $9\times$ to $12\times$)} while achieving an abstract explanation size very close (fewer than 9 features in average) to that of the optimal MILP solver. This demonstrates that, for single-round batch freeing, the greedy method offers a more practical and scalable solution.


\textbf{Performance with Iterative Refinement} 
This experiment compares the two methods in an iterative setting of the abstract batch freeing, where the perturbation domain is progressively refined (Section~\ref{sec:perturbation_domain}). For the iterative refinement process, the greedy approach maintained a substantial runtime advantage over the MILP solver, with a speedup up to $2.4\times$ on the GTSRB-CNN model, while producing abstract explanations that were consistently close in size to the optimal solution. The distinction between the circle and square markers is significant in Figure \ref{fig:fame_verix_comparison}. The square markers (card=False) tend to lie closer to or even above the diagonal line. This suggests that the cardinality-constrained domain, when successful, is highly effective at finding more compact explanations. 

\newtext{
\textbf{Impact of Iterative Refinement: } Comparing 'FAME: Single-round' vs. 'FAME: Iterative refinement' in Table \ref{tab:summary-results} isolates the impact of Algorithm \ref{alg:recursive}. For MNIST-CNN, iterative refinement reduces explanation size by ~36\% (190.29 to 122.09). This highlights the trade-off: a modest increase in runtime yields significantly more compact explanations.
}
\subsection{Comparison with State-of-the-Art (\gls{verixplus}) }
We compare in this section the results of \gls{verixplus} (alone) vs. FAME-accelerated \gls{verixplus}.

\textbf{Explanation Size and Runtime:} FAME consistently produces smaller explanations than \gls{verixplus} while being significantly faster, mainly due to  FAME's iterative refinement approach, as visually confirmed by the plots in Figure \ref{fig:fame_verix_comparison} that show a majority of data points falling below the diagonal line for both size and time comparisons. 
The runtime gains are particularly substantial for the GTSRB models (green and red markers), where FAME's runtime is often only a small fraction of \gls{verixplus}'s as shown in Table \ref{tab:summary-results}. 
In some cases, FAME delivers a non-minimal set that is smaller than \gls{verixplus} 's minimal set, with up to a $25\times$ speedup  (322.42 features in 7.4s compared to 338.28 in 185.03s for the GTSRB-CNN model) while producing \gls{WAXp}$^A$ that were consistently close in size to the optimal solution.

\textbf{The Role of Abstract Freeing:} The effectiveness of FAME's approach is further supported by the "distance to minimality" metric. The average distance to minimality was 44.21 for MNIST-FC and 104.09 for MNIST-CNN. 
An important observation from our experiments is that when the abstract domains in FAME are effective, they yield abstract abductive explanations $\wAXPA$ that are smaller than the abductive explanations ($\AXp$) from \gls{verixplus}. This is not immediately obvious from the summary table, as the final explanations may differ. Conversely, when FAME's abstract domains fail to find a valid free set, our method defaults to a binary search approach similar to \gls{verixplus}. However, since we do not use the Marabou solver in this phase, the resulting $\wAXPA$ is larger than the $\AXp$ provided by Marabou. This highlights the trade-off and the hybrid nature of our approach.

\newtext{Finally, to demonstrate the generality of our framework beyond standard benchmarks, in Appendix \ref{appendix:cifar_resnet} we provide additional experiments on the ResNet-2B architecture \cite{wang2021betacrown} trained on CIFAR-10. These results represent, to the best of our knowledge, the first formal explanations generated for such a complex architecture, highlighting FAME as an enabling technology for scalability.}

\section{Conclusion and Discussion}\label{sec:conclusion}
In this work, we introduced \textbf{FAME} (Formal Abstract Minimal Explanations), a novel framework for computing abductive explanations that effectively scales to large neural networks. By leveraging a hybrid strategy grounded in abstract interpretation and dedicated perturbation domains, we successfully addressed the long-standing sequential bottleneck of traditional formal explanation methods.

Our main contribution is a new approach that eliminates the need for traversal order by progressively shrinking dedicated perturbation domains and using LiRPA-based bounds to efficiently discard irrelevant features. The core of our method relies on a greedy heuristic for batch freeing that, as our analysis shows, is significantly faster than an exact MILP solver while yielding comparable explanation sizes. 
 
Our experimental results demonstrate that the full hybrid FAME pipeline outperforms the current state-of-the-art \gls{verixplus} baseline, providing a superior trade-off between computation time and explanation quality. We consistently observed significant reductions in runtime while producing explanations that are close to true minimality. This success highlights the feasibility of computing formal explanations for larger models and validates the effectiveness of our hybrid strategy.

Beyond its performance benefits, the FAME framework is highly generalizable. Although our evaluation focused on classification tasks, the framework can be extended to other machine learning applications, such as regression. \newtext{While we focused on robustness in continuous domains, FAME's high-level algorithms (batch certificate, greedy selection) support discrete features (see Appendix \ref{appendix-proof-discrete}). LiRPA natively handles discrete variables (e.g., one-hot encodings) via contiguous interval bounds. Furthermore, the framework can support other properties 
like 
}  local stability. 
Additionally, FAME can be configured to 
use exact solvers for the final refinement step, ensuring its adaptability and robustness for various use cases.

\newtext{Finally, we demonstrated FAME's scalability on the ResNet-2B (CIFAR-10) architecture. Although the abstraction gap naturally widens with depth, FAME's ability to rapidly prune irrelevant features establishes it as a critical enabling step for applying formal XAI to complex models where exact-only methods are currently intractable. By designing a framework that natively leverages certificates from modern, GPU-enabled verifiers, this work effectively bridges the gap between formal guarantees and practical scalability.}






\section*{Acknowledgements}
Our work has benefited from the AI Cluster ANITI and the research program DEEL.\footnote{\url{https://www.deel.ai/}} ANITI is funded by the France 2030 program under the Grant agreement n°ANR-23-IACL-0002. DEEL is an integrative program of the AI Cluster ANITI, designed and operated jointly with IRT Saint Exupéry, with the financial support from its industrial and academic partners and the France 2030 program under the Grant agreement n°ANR-10-AIRT-01. Within the DEEL program,  we are especially grateful to Franck MAMALET for their constant encouragement, valuable discussions, and insightful feedback throughout the development of this work. 
The work of Elsaleh, Bassan, and Katz was partially funded by the European Union
(ERC, VeriDeL, 101112713). Views and opinions expressed
are however those of the author(s) only and do not necessarily reflect those of the European Union or the European
Research Council Executive Agency. Neither the European
Union nor the granting authority can be held responsible for
them. The work of Elsaleh, Bassan, and Katz was additionally supported by a grant
from the Israeli Science Foundation (grant number 558/24). 
Elsaleh is also supported by the Ariane de Rothschild Women Doctoral Program.

\bibliographystyle{iclr2026_conference}
\bibliography{refs.bib}
\newpage
\appendix
\begin{center}
    {\LARGE Appendix}
\end{center}
\vspace{1em}

The appendix collects proofs, model specifications, and supplementary experimental results that support the main paper.

\noindent
\textbf{Appendix A} contains additional background on formal verification terminology, Abstract Interpretation, and LiRPA. \\
\textbf{Appendix B} contains the complete proofs of all propositions. \\
\textbf{Appendix C} provides the pseudocode for the FAME algorithms and the associated baselines. \\
\textbf{Appendix D} provides illustrative examples of abductive explanations and the greedy knapsack formulation. \\
\textbf{Appendix E} provides specifications of the datasets and architectures used, along with supplementary experimental results. \\
\textbf{Appendix F} details the scalability analysis on complex architectures (ResNet-2B on CIFAR-10).
\textbf{Appendix G} provides the LLM usage disclosure.

\vspace{2em}
\section{Background on Formal Verification}\label{appendix-background}

\subsection{Abstract Interpretation}
\newtext{
Abstract Interpretation is a theory of sound approximation of the semantics of computer programs. In the context of neural networks, it allows us to compute over-approximations of the network's output range without executing the network on every single point in the input domain (which is infinite).

While exact verification methods (like MILP solvers) provide precise results, they are generally NP-hard and do not scale to large networks. Abstract interpretation trades precision for scalability (typically polynomial time) by operating on abstract domains (e.g., intervals, zonotopes, or polyhedra) rather than concrete values.
}
\subsection{LiRPA (Linear Relaxation-based Perturbation Analysis)}
\newtext{
LiRPA (Linear Relaxation-based Perturbation Analysis) is a specific, efficient instance of abstract interpretation designed for neural networks. Instead of propagating simple intervals (which become too loose/imprecise in deep networks), LiRPA propagates linear constraints. For every neuron $x_j$, it computes two linear bounds relative to the input $x$:
$$\underline{w}_j^T x + \underline{b}_j \leq f_j(x) \leq \overline{w}_j^T x + \overline{b}_j$$

These linear bounds allow us to rigorously bound the "worst-case" behavior of the network much more tightly than simple intervals. If the lower bound of the correct class minus the upper bound of the target class is positive, we have a mathematically sound certificate of robustness.

\paragraph{Illustrative Example:}

Consider a nominal input image $\bar{x}$ from the MNIST dataset depicting the digit '7'. In a standard local robustness verification task, we define the input domain $\Omega(\bar{x})$ as an $l_\infty$-norm ball with a radius of $\epsilon = 0.05$. This implies that each pixel $x_i$ in the image is permitted to vary independently within the interval $[\bar{x}_i - 0.05, \bar{x}_i + 0.05]$. 

The verification objective is to prove that the property $P$ holds: specifically, that for every possible perturbed image $x \in \Omega(\bar{x})$, the network's output logit for the ground-truth class ('7') remains strictly greater than the logit for any target class $k$ (e.g., '1').  In the context of LiRPA, this is verified by computing a sound lower bound for the correct class ($\underline{f}_7$) and a sound upper bound for the competing class ($\overline{f}_1$). If the verified margin $\underline{f}_7 - \overline{f}_1 > 0$, the network is guaranteed to be robust against all perturbations in $\Omega(\bar{x})$.
}

\subsection{Verification Terminology}
We formulate the check for explanation sufficiency as a constraint satisfaction problem. A query is SAT if a valid perturbation (counter-example) exists, and UNSAT if no such perturbation exists (meaning the explanation is valid).

\begin{itemize}
    \item Soundness (No False Positives): A verifier is sound if it guarantees that any certified property is truly holds. In Abstract Interpretation, soundness is achieved because the computed abstract bounds strictly enclose the true concrete values. If these conservative bounds satisfy the property (UNSAT), the actual network must also satisfy it.
    
    \item Completeness (No False Negatives): A verifier is complete if it is capable of certifying \textit{any} valid explanation. Exact solvers (like MILP) are complete. In contrast, Abstract Interpretation is \textbf{incomplete}: due to over-approximation, the bounds may be too loose to prove a true property, leading to a "don't know" state where the explanation is valid, but the verifier cannot prove it.
\end{itemize}

\section{Proof}\label{appendix-proof}

\subsubsection*{THE ASYMMETRY OF PARALLEL FEATURE SELECTION}

\begin{proposition}[\textbf{Simultaneous Addition}]  \label{lemma:simultaneous_add}
    Any number of essential features can be added to the explanation \textbf{simultaneously}. This property allows us to leverage solvers capable of assessing \textit{multiple verification queries in parallel}, leading to a substantial reduction in runtime.
\end{proposition}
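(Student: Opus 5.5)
We first pin down what ``essential'' means. Fix a current explanation candidate, i.e.\ a set $\mathcal{X} \subseteq \mathcal{F}$ of fixed features, together with a set $\mathcal{R} = \mathcal{F}\setminus\mathcal{X}$ of features already freed. A feature $j \in \mathcal{X}$ is \emph{essential} relative to this state if freeing it breaks the guarantee: there exists $\mathbf{x'} \in \Omega(\mathbf{x})$ with $\mathbf{x'}_i = \mathbf{x}_i$ for all $i \in \mathcal{X}\setminus\{j\}$ and $f(\mathbf{x'}) \not\models \mathrm{P}$. Such a counterexample is exactly what a SAT answer, or a successful adversarial attack, returns for the individual query on $j$.

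The claim to prove is then twofold. Let $E = \{j_1,\dots,j_r\}$ be the features certified essential by $r$ independent queries. First, adding all of $E$ at once yields the same outcome as adding them one by one in any order. Second, no feature of $E$ ever becomes non-essential later in a deletion-based run.

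The key tool is monotonicity of Definition~\ref{def:formal-xai}. If $\mathcal{X}\subseteq\mathcal{Y}$, then the constraint set of $\mathcal{Y}$ is stronger, so the feasible inputs for $\mathcal{Y}$ form a subset of those for $\mathcal{X}$. Hence any explanation containing $\mathcal{X}$ that is a wAXp stays a wAXp, and, contrapositively, if $\mathcal{Y}$ is not an explanation then no subset of $\mathcal{Y}$ is.

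Now take any later state of the deletion process, with fixed set $\mathcal{X}' \subseteq \mathcal{X}$, and suppose $j \in E$ with $j\in\mathcal{X}'$. The set $\mathcal{X}'\setminus\{j\}$ is contained in $\mathcal{X}\setminus\{j\}$. The latter is not an explanation, witnessed by the counterexample for $j$. By monotonicity, $\mathcal{X}'\setminus\{j\}$ is not an explanation either, so $j$ remains essential. In particular, whatever order the sequential algorithm would have processed the elements of $E$ in, each would have been kept. So committing all of $E$ simultaneously is sound and does not alter the final minimal explanation relative to the sequential run. Because each individual query depends only on the common state $\mathcal{X}$ and not on the others' answers, the $r$ queries are independent and can be dispatched in parallel, which gives the runtime claim.

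The main obstacle is making precise the contrast with Proposition~\ref{lemma:simultaneous_free}, namely why the same argument does not work for freeing. There, an UNSAT answer for $j$ certifies only $\mathcal{X}\setminus\{j\}$. Freeing a second feature enlarges the domain, so the certificate does not transfer: monotonicity runs the wrong way. I would state this explicitly, and note that essentiality witnessed by a concrete counterexample is preserved under shrinking $\mathcal{X}$, while irrelevance is not. I would also remark that the argument uses only soundness of the counterexample, i.e.\ a genuine $\mathbf{x'}$, so it holds equally for attacks and exact solvers, but not for abstract ``unknown'' answers.
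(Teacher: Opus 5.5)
Your proposal is correct and follows the same approach as the paper's proof. In both, each feature whose individual release from the common state $\mathcal{X}$ yields a concrete counterexample is necessary, so all such features can be committed to the explanation at once, with the queries run in parallel. The paper's proof is essentially a one-line restatement of this. Your monotonicity step (subsets of non-explanations are non-explanations, so essentiality survives any later shrinking of the fixed set) supplies the justification it leaves implicit.
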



\begin{figure}[h!]
    \centering
    \begin{minipage}{0.48\textwidth}
        \centering
        \includegraphics[width=0.9\textwidth]{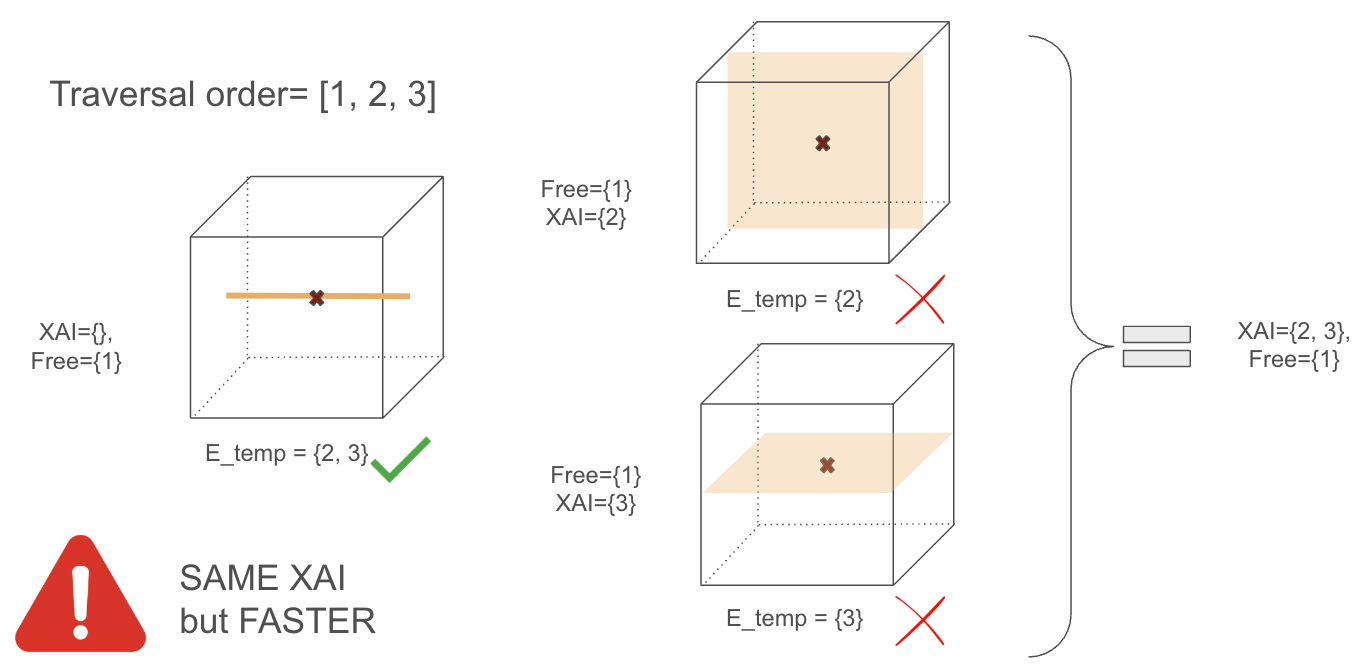}
        \text{(a) Adding several features at once is sound.}
    \end{minipage}
    \hfill
    \begin{minipage}{0.48\textwidth}
        \centering
        \includegraphics[width=0.8\textwidth]{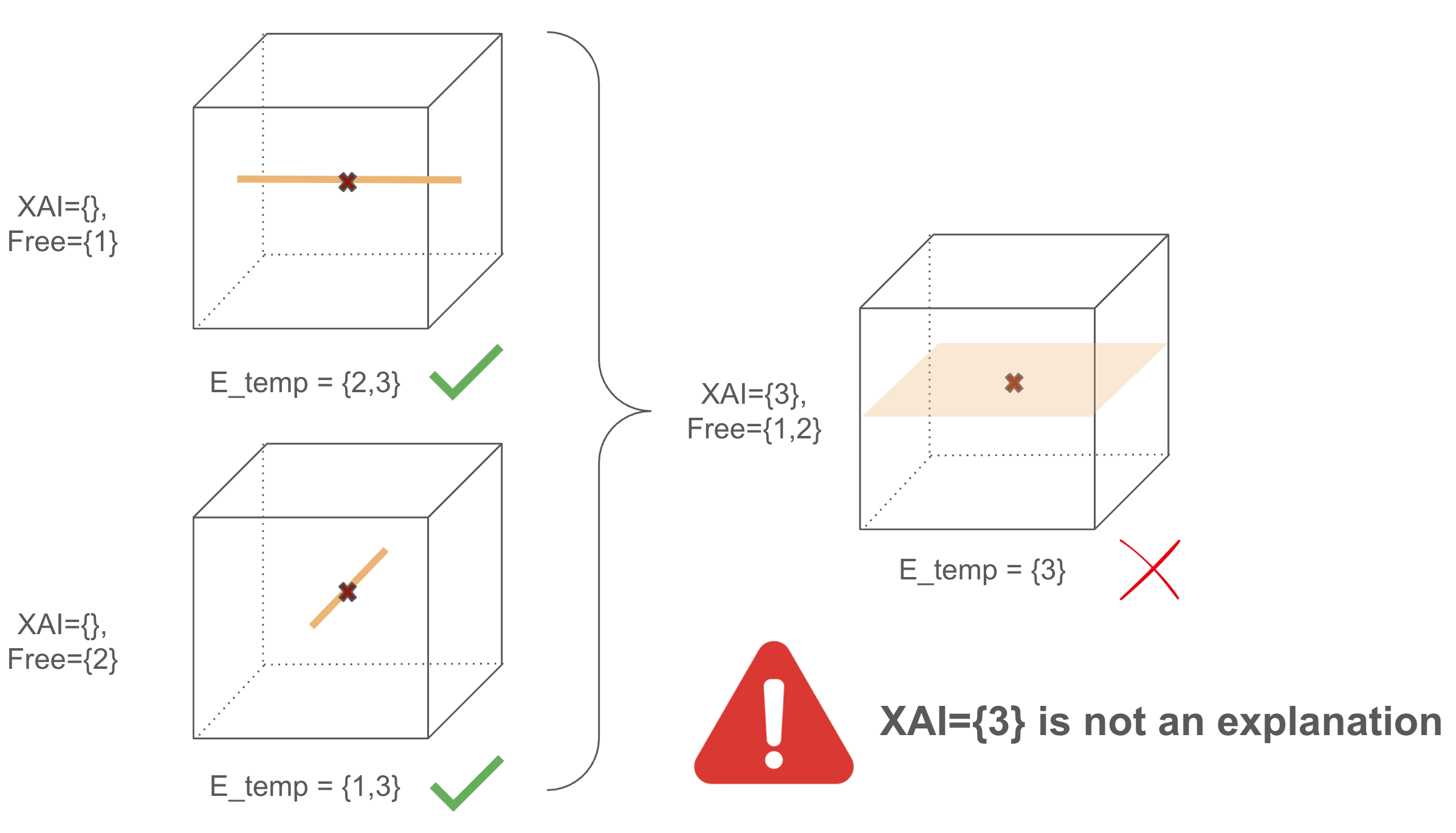}
        \text{(b) Freeing several features at once is unsound.}
    \end{minipage}
    \caption{Toy example illustrating the asymmetry between adding and freeing features.}
    \label{fig:parallel_add_free}
\end{figure}

\begin{proof}[Simultaneous Addition \ref{lemma:simultaneous_add}]
Let $\mathcal{X}$ be the current explanation candidate, and let 
$\mathcal{R} = \{r_1, \dots, r_k\}$ 
be a set of features not in $\mathcal{X}$.  
If, for every $r_i \in \mathcal{R}$, removing the single feature $r_i$ from the set 
$\mathcal{F} \setminus (\mathcal{X} \cup \{r_i\})$ 
produces a counterexample, then all features in $R$ are necessary and can be added to the explanation at once.
\end{proof}

\begin{proof}[Simultaneous freeing \ref{lemma:simultaneous_free}]
    If removing any feature from a set $\mathcal{R} \subseteq \mathcal{F} \setminus \mathcal{X}$ individually causes the explanation to fail (i.e., produces a counterexample), then all features in $\mathcal{R}$ can be added to the explanation $\mathcal{X}$ simultaneously.
\end{proof}

\begin{proof}[Batch-Certifiable Freeing \ref{lemma:batch-certificate}]
For any $i\ne c$ and $x'\in\Omega(x)$, lirpa bounds give
$f_i(x')-f_c(x')\le \overline{b}^i(x)+\sum_{j\in\mathcal{A}}\Delta_{i,j}(x')$
with $\Delta_{i,j}(x')\le c_{i,j}$. Taking the worst case over $x'$ and $i$
yields $f_i(x')-f_c(x')\le \Phi(\mathcal{A})\le 0$, precluding a label flip.

\end{proof}

-----
\subsubsection*{Proposition (Correctness of the recursive procedure)}

Let $\mathcal{A}$ be the set returned by Algorithm~\ref{alg:recursive} augmented with the final singleton refinement step that tests each remaining feature individually with the LiRPA certificate $\Phi(\cdot)$. Then:

\begin{enumerate}
  \item[(i)] (No singleton extension) For every feature $j\in \mathcal{F}\setminus\mathcal{A}$ we have
  \[
    \Phi(\mathcal{A}\cup\{j\}) > 0,
  \]
  i.e. no single feature can be added to $\mathcal{A}$ while preserving the certificate. Hence $\mathcal{A}$ is \emph{singleton-maximal} with respect to the LiRPA certificate.
  \item[(ii)] (Termination) Algorithm~\ref{alg:recursive} terminates in at most $|\mathcal{F}|$ outer iterations (and finitely many inner steps).
  \item[(iii)] (Full abstract minimality — conditional) If the inner batch solver called by Algorithm~\ref{alg:recursive} returns, for each tested budget $p$, a \emph{globally optimal} certified free set (i.e., for the current domain it finds a maximum-cardinality $\mathcal{A}_p$ satisfying $\Phi(\mathcal{A}_p)\le 0$), then the final $\mathcal{A}$ is a globally maximal certified free set: there is no $\mathcal{A}'\supsetneq\mathcal{A}$ with $\Phi(\mathcal{A}')\le 0$. In this case $\mathcal{A}$ is a true minimal abstract explanation (with respect to the chosen LiRPA relaxation).
\end{enumerate}

\paragraph{Proof.}
\textbf{(i) No singleton extension.} By construction, the algorithm performs a final singleton refinement: it tests every feature $j\in\mathcal{F}\setminus\mathcal{A}$ by evaluating the certificate on $\mathcal{A}\cup\{j\}$. The algorithm only adds $j$ to $\mathcal{A}$ if $\Phi(\mathcal{A}\cup\{j\})\le 0$. Since the refinement ends with no further additions, it follows that for every remaining $j$ we have $\Phi(\mathcal{A}\cup\{j\})>0$. This is exactly the stated property.

\textbf{(ii) Termination.} Each time the algorithm adds at least one feature to $\mathcal{A}$, the cardinality $|\mathcal{A}|$ strictly increases and cannot exceed $|\mathcal{F}|$. The outer loop therefore performs at most $|\mathcal{F}|$ successful additions. If an outer iteration yields no new features, the loop stops. Inner loops (scanning budgets $p$ or performing singleton checks) are finite since they iterate over finite sets. Hence the algorithm terminates in finite time.

\textbf{(iii) Full abstract minimality under optimal inner solver.} Suppose that for every domain tested, the inner routine (called for each $p$) returns a certified free set of maximum possible cardinality among all subsets that satisfy $\Phi(\cdot)\le 0$ on that domain. During each outer iteration the algorithm enumerates budgets $p$ (or otherwise explores the space of allowed cardinalities) and selects the largest $\mathcal{A}_p$ found; then $\mathcal{A}$ is augmented by that largest globally-feasible batch. If no nonempty globally-feasible batch exists for any tested $p$, then no superset of the current $\mathcal{A}$ can be certified (because any superset would have some cardinality $p'$ tested and the solver would have returned it). After the final singleton checks (which also use the optimal verifier on singletons), there remains no single feature that can be added. Combining these facts yields that no superset of $\mathcal{A}$ is certifiable, i.e. $\mathcal{A}$ is a globally maximal certified free set, as claimed.

\qed

\paragraph{Abstract Minimal Explanation}

\begin{proof}[Correctness of Iterative Singleton Freeing]
Let $\mathcal{F}$ be the candidate feature set and let $\mathcal{A}_0\subseteq\mathcal{F}$ be an initial free set such that the LiRPA certificate verifies $\mathcal{A}_0$ (i.e.\ $\Phi(\mathcal{A}_0)\le 0$).  
Run the Iterative Singleton Freeing procedure (Algorithm~\ref{alg:singleton_free}) with traversal order $\pi$. The algorithm returns a set $\mathcal{A}$ with the following properties:
\begin{enumerate}
  \item \textbf{(Soundness)} The final set $\mathcal{A}$ satisfies $\Phi(\mathcal{A})\le 0$ (every added singleton was certified).
  \item \textbf{(Termination)} The algorithm terminates after at most $|\mathcal{F}|-|\mathcal{A}_0|$ successful additions (hence in finite time).
  \item \textbf{(Singleton-maximality)} For every $j\in\mathcal{F}\setminus\mathcal{A}$ we have $\Phi(\mathcal{A}\cup\{j\})>0$, i.e. no remaining single feature can be certified as free.
\end{enumerate}
\end{proof}

\begin{proof}
\textbf{Soundness (invariant).} By assumption $\Phi(\mathcal{A}_0)\le 0$. The algorithm only appends a feature $i$ to the current free set after a LiRPA call returns success on $\mathcal{A}\cup\{i\}$, i.e. $\Phi(\mathcal{A}\cup\{i\})\le 0$. Since LiRPA certificates are sound, every update preserves the invariant “current $\mathcal{A}$ is certified”. Therefore the final $\mathcal{A}$ satisfies $\Phi(\mathcal{A})\le 0$.

\textbf{Termination.} Each successful iteration increases $|\mathcal{A}|$ by one and $|\mathcal{A}|\le|\mathcal{F}|$. Thus there can be at most $|\mathcal{F}|-|\mathcal{A}_0|$ successful additions. The algorithm halts when a full scan yields no addition; since scans iterate over a finite set ordered by $\pi$, the procedure terminates in finite time.

\textbf{Singleton-maximality.} Assume by contradiction that after termination there exists $j\in\mathcal{F}\setminus\mathcal{A}$ with $\Phi(\mathcal{A}\cup\{j\})\le 0$. The final scan that caused termination necessarily tested $j$ (traversal order covers all remaining indices), so the algorithm would have added $j$, contradicting termination. Hence for every $j\in\mathcal{F}\setminus\mathcal{A}$ we must have $\Phi(\mathcal{A}\cup\{j\})>0$, proving singleton-maximality.
\end{proof}

\paragraph{Worked counterexample (illustrating joint freeing).}
Consider a toy binary classifier with two input features $x_1,x_2$ and property $\mathcal{P}$: the label remains class~0 iff $f_0(x')-f_1(x')\ge 0$. Suppose the LiRPA relaxation yields conservative linear contributions such that
\[
\overline{b} + c_1 > 0,\qquad \overline{b} + c_2 > 0,\quad\text{but}\quad \overline{b} + c_1 + c_2 \le 0,
\]
where $c_i$ is the worst-case contribution of feature $i$ and $\overline{b}$ is the baseline margin. Then neither singleton $\{1\}$ nor $\{2\}$ is certifiable (each violates the certificate), but the joint set $\{1,2\}$ is certifiable. The iterative singleton procedure terminates without adding either feature, while a batch routine (or an optimal MKP solver) would free both. This demonstrates the algorithm’s limitation: it guarantees only singleton-maximality, not global maximality over multi-feature batches.

\paragraph{Complexity and practical cost.}
Let $n=|\mathcal{F}|$. In the worst case the algorithm may attempt a LiRPA call for every remaining feature on each outer iteration. If $r$ features are eventually added, the total number of LiRPA calls is bounded by
\[
(n) + (n-1) + \dots + (n-r+1) \;=\; r\cdot n - \frac{r(r-1)}{2} \;\le\; \frac{n(n+1)}{2} = \mathcal{O}(n^2).
\]
Thus worst-case LiRPA call complexity is quadratic in $n$. In practice, however, each successful addition reduces the candidate set and often many iterations terminate early; empirical behavior tends to be much closer to linear in $n$ for structured data because (i) many features are certified in early passes and (ii) LiRPA calls are highly parallelizable across features and can exploit GPU acceleration. Finally, the dominant runtime factor is the per-call cost of LiRPA (forward/backward bound propagation); therefore hybrid strategies (batch pre-filtering, prioritized traversal orders, occasional exact-solver checks on promising subsets) are useful to reduce the number of expensive LiRPA evaluations.

\subsection*{FAME for Discrete Data}\label{appendix-proof-discrete}
\newtext{
FAME, as presented, uses LiRPA, which is designed for continuous (
) domains. 
A discrete feature $j$ with admissible values in a
finite set $S_j$ can be incorporated by specifying an interval domain, which is the standard
abstraction used in LiRPA-based verification.

Consequently, FAME allows a discrete feature to vary over its admissible values. LiRPA
supports this by assigning
\[
    x'_j \in [\min S_j,\;\max S_j],
\]
or, if only a subset $S'_j \subseteq S_j$ is permitted,
\[
    x'_j \in [\min S'_j,\;\max S'_j],
\]
provided that the values form a contiguous range.

If a feature belongs to the explanation, it is fixed to its nominal value, which corresponds to
assigning the zero-width interval $[x_j, x_j]$.

Note that freeing a feature to a non-contiguous set (e.g., allowing $\{1,4\}$ but excluding
$\{2,3\}$) cannot be represented exactly, since LiRPA abstractions are convex intervals.
Extending LiRPA to arbitrary finite non-convex domains is left for future work. In
practice, such cases are rare: when categorical values have no meaningful numeric ordering,
one-hot encodings are standard, and each coordinate becomes a binary $\{0,1\}$ feature
naturally supported by interval domains.

\medskip
\noindent
Since FAME only requires sound per-feature lower and upper bounds, all its components,
including the batch certificate $\Phi(A)$ and the refinement steps, apply directly to discrete
and categorical features.

}
\section{Algorithm}\label{appendix-algo}
\newtext{
This appendix details the algorithmic procedures supporting the FAME framework and its baselines. We present four key algorithms:

\begin{itemize}
    \item \textbf{Algorithm~\ref{alg:BINARYSEARCH} (BINARYSEARCH)}: An enhanced version of the binary search traversal strategy used in Verix+. It employs a divide-and-conquer approach to identify irrelevant features, accepting a generic verification oracle (e.g., Marabou or LiRPA) as an input parameter.
    
    \item \textbf{Algorithm~\ref{alg:batch_add} (Simultaneous Add)}: An acceleration heuristic that uses adversarial attacks to quickly identify necessary features. By checking if relaxing a specific feature immediately leads to a counterexample via attacks (e.g., PGD), we can efficiently add necessary features to the explanation without expensive verification calls.
    
    \item \textbf{Algorithm~\ref{alg:singleton_free} (Iterative Singleton Freeing)}: A refinement procedure that iterates sequentially through the remaining candidate features. It utilizes LiRPA certificates to check if individual features can be safely freed, serving as a final cleanup step for features that could not be certified in batches.
    
    \item \textbf{Algorithm 5 (Recursive Abstract Batch Freeing)}: The core recursive loop of our framework. It iteratively tightens the perturbation domain using cardinality constraints (varying $m$) and invokes the greedy batch-freeing heuristic to maximize the size of the abstract explanation, concluding with a singleton refinement step.
\end{itemize}
}

\subsection{Verix+}

In this enhanced BINARYSEARCH algorithm, the solver (e.g., Marabou or Lirpa) is passed as an explicit parameter to enable the CHECK function, which performs the core verification queries.
\begin{algorithm}[H]
\caption{BINARYSEARCH($f$, $x_{\Theta}$, solver)}
\label{alg:BINARYSEARCH}
\begin{algorithmic}[1]
\Function{BINARYSEARCH}{$f, x_{\Theta}$, solver}
    \If{$|x_{\Theta}| = 1$}
        \If{CHECK($f, x_B \cup x_{\Theta}$, solver)}
            \State $x_B \gets x_B \cup x_{\Theta}$
            \State \textbf{return}
        \Else
            \State $x_A \gets x_A \cup x_{\Theta}$
            \State \textbf{return}
        \EndIf
    \EndIf
    \State $x_{\Phi}, x_{\Psi} = \text{split}(x_{\Theta}, 2)$
    \If{CHECK($f, x_B \cup x_{\Phi}$, solver)}
        \State $x_B \gets x_B \cup x_{\Phi}$
        \If{CHECK($f, x_B \cup x_{\Psi}$, solver)}
            \State $x_B \gets x_B \cup x_{\Psi}$
        \Else
            \If{$|x_{\Psi}| = 1$}
                \State $x_A \gets x_A \cup x_{\Psi}$
            \Else
                \State BINARYSEARCH($f, x_{\Psi}$, solver)
            \EndIf
        \EndIf
    \Else
        \If{$|x_{\Phi}| = 1$}
            \State $x_A \gets x_A \cup x_{\Phi}$
        \Else
            \State BINARYSEARCH($f, x_{\Phi}$, solver)
        \EndIf
    \EndIf
\EndFunction
\end{algorithmic}
\end{algorithm}

\subsection{Simultaneous Add}

\begin{algorithm}[H]
\caption{Simultaneous Add}\label{alg:batch_add}
\begin{algorithmic}[1]
\State \textbf{Input:} model $f$, input $x$, candidate set $\mathcal{F}$, current free set $\mathcal{A}$, adversarial procedure \Call{ATTACK}, property $\mathcal{P}$
\State \textbf{Initialize:} $\mathcal{E} \gets \emptyset$ \Comment{set of necessary features}
\For{$i \in \mathcal{F} \setminus \mathcal{A}$}
    \State $\mathcal{F}' \gets \mathcal{F} \setminus \{i\}$
    \If{\Call{ATTACK}{$f, \Omega(x, \mathcal{F}'), \mathcal{P}$} succeeds}
        \State $\mathcal{E} \gets \mathcal{E} \cup \{i\}$ \Comment{$i$ must remain fixed}
    \EndIf
\EndFor
\State \textbf{Return:} $\mathcal{E}$
\end{algorithmic}
\end{algorithm}

\subsection{Iterative Singleton Freeing}

\begin{algorithm}[H]
\caption{Iterative Singleton Free}\label{alg:singleton_free}
\begin{algorithmic}[1]
\State \textbf{Input:} model $f$, input $x$, candidate set $\mathcal{F}$, free set $\mathcal{A}$, certificate method \Call{LiRPA}, traversal order $\pi$, property $\mathcal{P}$
\Repeat
    \State $\texttt{found} \gets \textbf{false}$
    \For{$i \in \pi$ with $i \in \mathcal{F} \setminus \mathcal{A}$}
        \If{\Call{LiRPA}{$f, \Omega(x, \mathcal{A} \cup \{i\}), \mathcal{P}$} succeeds}
            \State $\mathcal{A} \gets \mathcal{A} \cup \{i\}$
            \State $\texttt{found} \gets \textbf{true}$
            \State \textbf{break} \Comment{restart scan from beginning of $\pi$}
        \EndIf
    \EndFor
\Until{$\texttt{found} = \textbf{false}$}
\State \textbf{Return:} $\mathcal{A}$
\end{algorithmic}
\end{algorithm}

\subsection{Recursive Simultaneous Free}

\begin{algorithm}[H]
\caption{Recursive Abstract Batch Freeing}
\begin{algorithmic}[1]
\State \textbf{Input:} model $f$, input $x$, candidate set $\mathcal{F}$
\State \textbf{Initialize:} $\mathcal{A} \gets \emptyset$ \Comment{certified free set}
\Repeat
    \State $\mathcal{A}_{best} \gets \emptyset$
    \For{$m = 1 \dots |\mathcal{F} \setminus \mathcal{A}|$}
        \State $\mathcal{A}_m \gets$ \Call{GreedyAbstractBatchFreeing}{$f, \Omega^m(x; \mathcal{A}), \mathcal{F}\setminus\mathcal{A}$}
        \If{$|\mathcal{A}_m| > |\mathcal{A}_{best}|$}
            \State $\mathcal{A}_{best} \gets \mathcal{A}_m$
        \EndIf
    \EndFor
    \State $\mathcal{A} \gets \mathcal{A} \cup \mathcal{A}_{best}$
\Until{$\mathcal{A}_{best} = \emptyset$}
\State $\mathcal{A}$ = \Call{Iterative Singleton Free}{f, x, $\mathcal{F}$, $\mathcal{A}$}\Comment{refine by testing remaining features}
\State \textbf{Return:} $\mathcal{A}$
\end{algorithmic}
\end{algorithm}
\section{Examples}\label{appendix-examples}

\subsection{Illustration of Abductive Explanation}\label{example-abductive}

\newtext{

Figure \ref{fig:toy_example_main} illustrates a 3D classification task. For the starred sample, we seek an explanation for its classification within a local cube-shaped domain. As shown in Figure \ref{fig:abductive_example_main}, fixing only feature $\mathbf{x}_2$ (i.e. freeing $\{\mathbf{x}_1,\mathbf{x}_3\}$, restricting perturbations to the orange plane) is not enough to guarantee the property, since a counterexample exists. However, fixing both $\mathbf{x}_2$ and $\mathbf{x}_3$ (orange line on free $x_1$) defines a 'safe' subdomain where the desired property holds true, since no counterexample exists in that subdomain. Therefore, $\mathcal{X}=\{\mathbf{x}_2, \mathbf{x}_3\}$ is an abductive explanation. Since neither $\{\mathbf{x}_2\}$ nor $\{\mathbf{x}_3\}$ are explanations on their own, $\{\mathbf{x}_2, \mathbf{x}_3\}$ is minimal. But it is not minimum since $\mathcal{X}=\{\mathbf{x}_1\}$ is also a minimal abductive explanation with a smaller cardinality. Two special cases are worth noting: an empty explanation (all features are irrelevant) and a full explanation (the entire input is necessary).

If all features are irrelevant, the explanation is the empty set, and no valid explanation exists. Conversely, if perturbing any feature in the input $\mathbf{x}$ changes the prediction, the entire input must be fixed, making the full feature set the explanation. 

\begin{figure}[ht!]
    \centering
    \begin{minipage}{0.44\textwidth}
        \centering
        \includegraphics[width=0.65\textwidth]{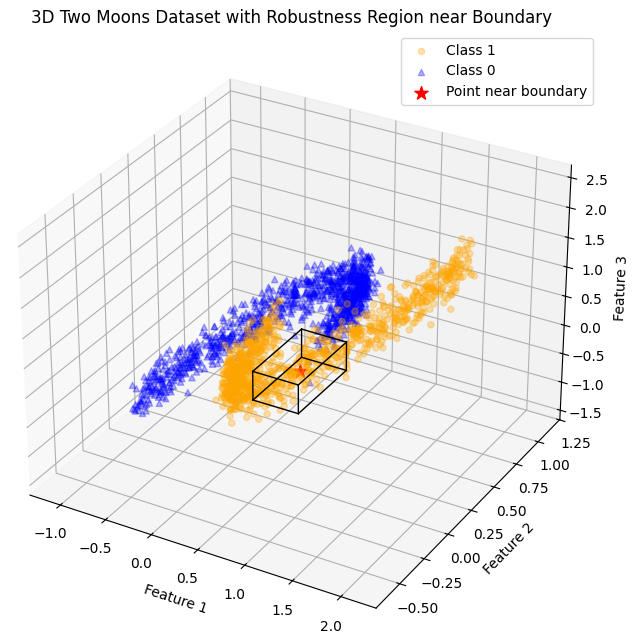}
        \caption{A 3D classification task.}
        \label{fig:toy_example_main}
    \end{minipage}
    \hfill
    \begin{minipage}{0.54\textwidth}
        \centering
        \includegraphics[width=0.9\textwidth]{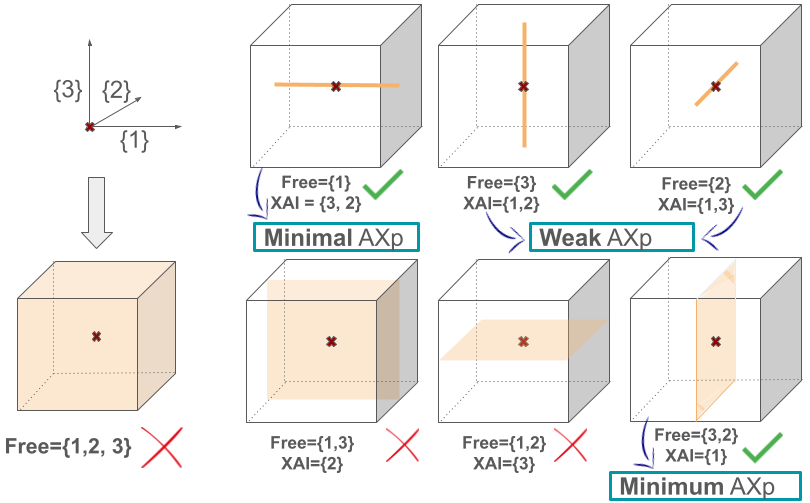}
        \caption{AXps with different properties.}
        \label{fig:abductive_example_main}
    \end{minipage}
\end{figure}
}

\subsection{Illustration of the Knapsack Formulation}\label{exp:greedy-knapsack}

\noindent
This is an example demonstrating how the greedy heuristic described in Algorithm \ref{alg:abstract-batch} works. 
Given a multi-class classification problem with three classes: 0, 1, and 2. The model correctly predicts class 0 for a given input. We want to free features from the irrelevant set $\mathcal{A}$ based on the abstract batch certificate. We have three candidate features to free: $j_1$, $j_2$, and $j_3$. The baseline budgets for the non-ground-truth classes are:

\begin{itemize}
    \item Class 1: $-\overline{b}^1 = 10$
    \item Class 2: $-\overline{b}^2 = 20$
\end{itemize}

The normalized costs for each feature are calculated as $c_{i,j} / (-\overline{b}^i)$:
\begin{table}[htbp]
  \centering
  \caption{Example of Greedy Heuristic Decision Making}
  \label{tab:greedy_heuristic_example}
  \resizebox{\linewidth}{!}{
  \begin{tabular}{l|c|c|c}
    \hline\hline
    \textbf{Feature} & \textbf{Normalized Cost for Class 1} & \textbf{Normalized Cost for Class 2} & \textbf{Maximum Normalized Cost} \\
    \textbf{($j$)} & \textbf{($c_{1,j}/(-\overline{b}^1)$)} & \textbf{($c_{2,j}/(-\overline{b}^2)$)} & \textbf{($\max_i$)} \\
    \hline
    $j_1$ & $2 / 10 = 0.2$ & $8 / 20 = 0.4$ & 0.4 \\
    $j_2$ & $7 / 10 = 0.7$ & $4 / 20 = 0.2$ & 0.7 \\
    $j_3$ & $3 / 10 = 0.3$ & $3 / 20 = 0.15$ & 0.3 \\
    \hline\hline
  \end{tabular}
  }
\end{table}

The algorithm's objective is to minimize the maximum normalized cost across all non-ground-truth classes. As shown in the table, the minimum value in the "Maximum Normalized Cost" column is 0.3, which corresponds to feature $j_3$. Therefore, the greedy heuristic selects feature $j_3$ to be added to the free set in this step, as it represents the safest choice.
\section{Experiments}\label{appendix-xps}

\subsection{Model Specification}

We evaluated our framework on standard image benchmarks including the MNIST\cite{yann2010mnist} and GTSRB\cite{stallkamp2012man} datasets. We used both fully connected and convolutional models trained in a prior state-of-the-art VERIX+\cite{wu2024better} to perform our analysis.

The MNIST dataset consists of $28 \times 28 \times 1$ grayscale handwritten images. The architectures of the fully connected and convolutional neural networks trained on this dataset are detailed in Table \ref{tab:mnist_fc} and Table \ref{tab:mnist_cnn}, respectively. These models achieved prediction accuracies of 93.76\% for the fully connected model and 96.29\% for the convolutional model.

\begin{table}[ht!]
  \centering
  \caption{Architecture of the MNIST-FC model.}
  \begin{tabular}{l|c|c|c}
    \hline\hline
    \textbf{Layer Type} & \textbf{Input Shape} & \textbf{Output Shape} & \textbf{Activation} \\
    \hline
    Flatten & $28 \times 28 \times 1$ & 784 & - \\
    Fully Connected & 784 & 10 & ReLU \\
    Fully Connected & 10 & 10 & ReLU \\
    Output & 10 & 10 & - \\
    \hline\hline
  \end{tabular}
  \label{tab:mnist_fc}
\end{table}

\begin{table}[ht!]
  \centering
  \caption{Architecture of the MNIST-CNN model.}
  \begin{tabular}{l|c|c|c}
    \hline\hline
    \textbf{Layer Type} & \textbf{Input Shape} & \textbf{Output Shape} & \textbf{Activation} \\
    \hline
    Convolution 2D & $28 \times 28 \times 1$ & $13 \times 13 \times 4$ & - \\
    Convolution 2D & $13 \times 13 \times 4$ & $6 \times 6 \times 4$ & - \\
    Flatten & $6 \times 6 \times 4$ & 144 & - \\
    Fully Connected & 144 & 20 & ReLU \\
    Output & 20 & 10 & - \\
    \hline\hline
  \end{tabular}
  \label{tab:mnist_cnn}
\end{table}

The GTSRB dataset contains colored images of traffic signs with a shape of 32×32×3 and includes 43 distinct categories. In the models used for our experiments, which were trained by the authors of \gls{verixplus}, only the 10 most frequent categories were used to mitigate potential distribution shift and obtain higher prediction accuracies. The architectures of the fully connected and convolutional models trained on GTSRB are presented in Table \ref{tab:gtsrb_fc} and Table \ref{tab:gtsrb_cnn}, respectively. These networks achieved prediction accuracies of 85.93\% and 90.32\%, respectively.

\begin{table}[ht!]
  \centering
  \caption{Architecture of the GTSRB-FC model.}
  \begin{tabular}{l|c|c|c}
    \hline\hline
    \textbf{Layer Type} & \textbf{Input Shape} & \textbf{Output Shape} & \textbf{Activation} \\
    \hline
    Flatten & $32 \times 32 \times 3$ & 3072 & - \\
    Fully Connected & 3072 & 10 & ReLU \\
    Fully Connected & 10 & 10 & ReLU \\
    Output & 10 & 10 & - \\
    \hline\hline
  \end{tabular}
  \label{tab:gtsrb_fc}
\end{table}

\begin{table}[ht!]
  \centering
  \caption{Architecture of the GTSRB-CNN model.}
  \begin{tabular}{l|c|c|c}
    \hline\hline
    \textbf{Layer Type} & \textbf{Input Shape} & \textbf{Output Shape} & \textbf{Activation} \\
    \hline
    Convolution 2D & $32 \times 32 \times 3$ & $15 \times 15 \times 4$ & - \\
    Convolution 2D & $15 \times 15 \times 4$ & $7 \times 7 \times 4$ & - \\
    Flatten & $7 \times 7 \times 4$ & 196 & - \\
    Fully Connected & 196 & 20 & ReLU \\
    Output & 20 & 10 & - \\
    \hline\hline
  \end{tabular}
  \label{tab:gtsrb_cnn}
\end{table}

\newtext{
The CIFAR-10 dataset contains colored images of common objects with a shape of $32 \times 32 \times 3$ and includes 10 distinct categories. 
The architecture of the ResNet-2B model used is detailed in Table \ref{tab:resnet_cifar}. This model (sourced from the Neural Network Verification Competition (VNN-COMP) \cite{wang2021betacrown}) is a compact residual network benchmark designed for neural network verification on CIFAR-10. Intended to help verification tools evolve beyond simple feedforward networks, this model was adversarially trained with an $L_\infty$ perturbation epsilon of $2/255$.

\begin{table}[ht!]
  \centering
  \caption{Architecture of the ResNet-2B model (CIFAR-10).}
  \begin{tabular}{l|c|c|c}
    \hline\hline
    \textbf{Layer Type} & \textbf{Input Shape} & \textbf{Output Shape} & \textbf{Activation} \\
    \hline
    Reshape & 3072 & $32 \times 32 \times 3$ & - \\
    Convolution 2D & $32 \times 32 \times 3$ & $16 \times 16 \times 8$ & ReLU \\
    Residual Block (Downsample) & $16 \times 16 \times 8$ & $8 \times 8 \times 16$ & ReLU \\
    Residual Block & $8 \times 8 \times 16$ & $8 \times 8 \times 16$ & ReLU \\
    Flatten & $8 \times 8 \times 16$ & 1024 & - \\
    Fully Connected & 1024 & 100 & ReLU \\
    Output & 100 & 10 & - \\
    \hline\hline
  \end{tabular}
  \label{tab:resnet_cifar}
\end{table}
}

\subsection{Detailed Experimental Setup}
We configured the \gls{verixplus} implementation with the following settings: binary\_search=true, logit\_ranking=true, and traversal\_order=bounds. 
To identify necessary features, we used the Fast Gradient Sign (FGS) technique for singleton attack addition, though the Projected Gradient Descent (PGD) is also available for this purpose.

\newtext{We performed a comprehensive sensitivity analysis covering: (1) Solver Choice: Table 1 shows the Greedy heuristic finds explanations nearly identical in size to the optimal MILP solver (gap $<9$ features), validating its near-optimality. (2) Cardinality Constraints: Figure 4 confirms that using the constraint (card=True) yields significantly smaller explanations. (3) Perturbation Magnitude ($\epsilon$): While we adhered to standard baseline values used by the baseline VERIX+ (e.g., 0.05 for MNIST, 0.01 for GTSRB) to ensure a direct and fair comparison, we acknowledge that explanation size is inversely related to $\epsilon$, as larger radii result in looser bounds.}





\subsection{Supplementary experimental results}\label{appendix-supp-xp}

\paragraph{PERFORMANCE WITH ITERATIVE REFINEMENT}
The three plots compare the performance of a greedy heuristic with an exact MILP solver for an iterative refinement task. The central finding across all three visualizations is that the greedy heuristic provides a strong trade-off between speed and solution quality, making it a more practical approach for large-scale problems.

\begin{figure*}[htbp]
    \centering
    \begin{minipage}[c]{0.32\textwidth}
        \centering
        \includegraphics[width=\textwidth]{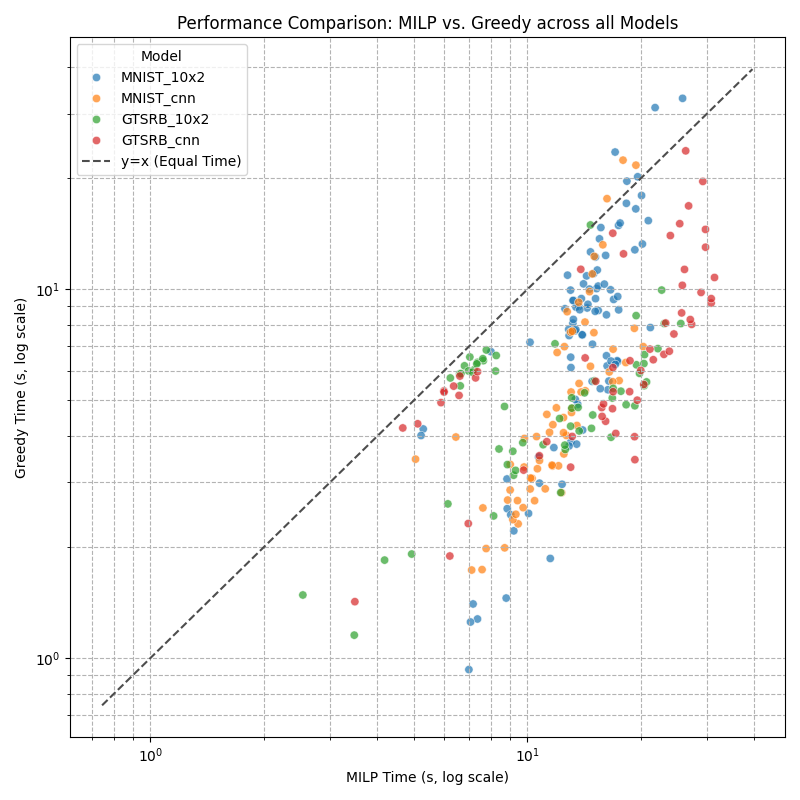}
        \label{fig:time_comparison-appendix}
    \end{minipage}
    \begin{minipage}[c]{0.32\textwidth}
        \centering
        \includegraphics[width=\textwidth]{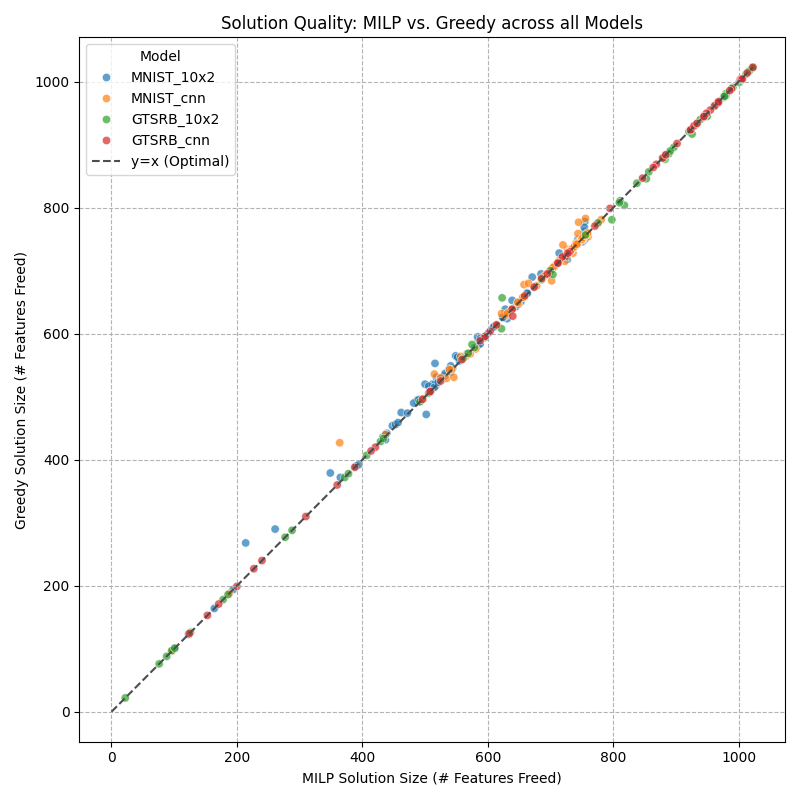}
        \label{fig:size_comparison-appendix}
    \end{minipage}
    \begin{minipage}[c]{0.32\textwidth}
        \centering
        \includegraphics[width=\textwidth]{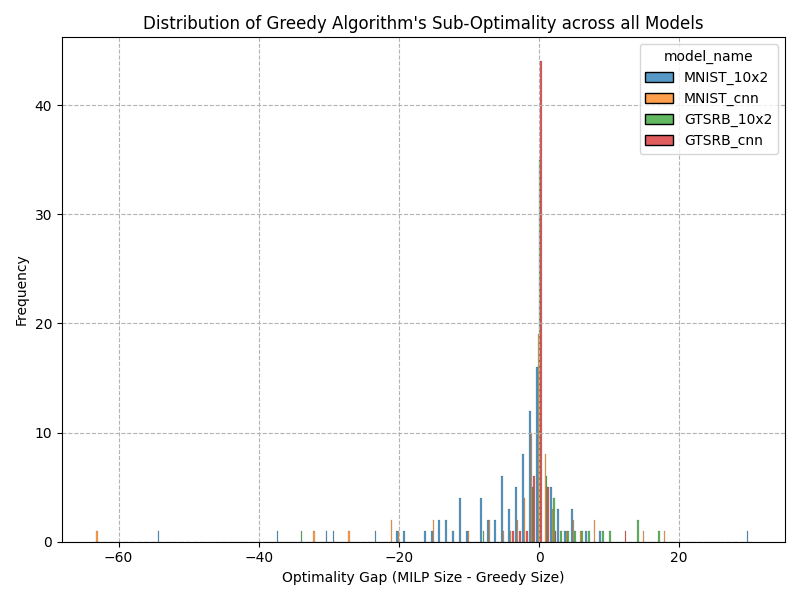}
        \label{fig:optimality_gap_histogram-appendix}
    \end{minipage}
    \caption{
        \textbf{Performance Comparison of FAME's Abstract Batch Freeing Methods.} These three plots compare the greedy heuristic against the exact MILP solver for the \textbf{iterative refinement} task for all the models. The first plot shows the runtime comparison of the two methods on a log-log scale.  
        The second plot compares the size of the freed feature set for both methods. 
        The third plot illustrates the distribution of the optimality gap (MILP size - Greedy size).
    }
    \label{fig:milp_greedy_comparison}
\end{figure*}

\textbf{Analysis of FAME's Abstract Batch Freeing}
The visualizations demonstrate that the greedy heuristic provides a strong trade-off between speed and solution quality for the iterative refinement task.

\begin{itemize}
    \item \textbf{Runtime Performance:} As shown in the first plot, the greedy algorithm is consistently faster than the MILP solver. This is evidenced by the data points for all models lying significantly below the diagonal line, confirming a substantial gain in runtime.
    \item \textbf{Solution Quality:} The second plot shows that the greedy algorithm produces solutions of comparable quality to the optimal MILP solver. The tight clustering of data points along the diagonal line for all models indicates a strong correlation between the sizes of the freed feature sets.
    \item \textbf{Optimality Gap:} The histogram of the final plot reinforces these findings by showing that the greedy heuristic frequently achieves the optimal solution, with the highest frequency of samples occurring at a gap of zero. The distribution further confirms that any sub-optimality is typically minimal.

\end{itemize}

\section{Scalability Analysis on Complex Architectures (ResNet-2B on CIFAR-10)} \label{appendix:cifar_resnet}

\newtext{To validate the scalability of FAME on architectures significantly deeper and more complex than standard benchmarks, we conducted an evaluation on the ResNet-2B model (2 residual blocks, 5 convolutional layers, 2 linear layers) trained on the CIFAR-10 dataset \cite{wang2021betacrown}. We utilized an $L_\infty$ perturbation budget of $\epsilon = 2/255$. These additional experiments were conducted on a server equipped with an NVIDIA A100 80GB GPU. 


\paragraph{Feature Definition.}
For these experiments, we define the feature set $\mathcal{F}$ at the \textbf{pixel level}. Consequently, the total number of features is $N = 32 \times 32 = 1024$. Freeing a feature in this context corresponds to simultaneously relaxing the constraints on all three color channels (RGB) for that specific pixel location.

\paragraph{Feasibility and Comparison.}

Running exact formal explanation methods (such as the complete VERIX+ pipeline with Marabou) on this architecture resulted in consistent timeouts 
or memory exhaustion, confirming that exact minimality is currently out of reach for this complexity class. In contrast, FAME successfully terminated for all processed samples. 

\paragraph{Detailed Quantitative Results by Configuration.}
To rigorously assess the contribution of each component in the FAME framework, we evaluated three configurations ($N=100$ samples). The results are summarized below:
\begin{itemize}
\item \textbf{Single-Round Abstract Freeing (Algorithm 1 only).}
    This baseline represents a static approach without domain refinement.
    \begin{itemize}
        \item \textit{Performance:} It freed an average of only \textbf{5.53 features} (pixels).
        \item \textit{Insight:} This confirms that on deep networks, the initial abstract bounds are too loose to certify meaningful batches in a single pass. A static traversal strategy would fail here.
        \item \textit{Solver Comparison:} The Greedy heuristic (5.53 features, 50.8s) performed identically to the optimal MILP solver (5.37 features, 50.8s), validating the heuristic's quality.
    \end{itemize}

\item \textbf{Recursive Abstract Refinement (Algorithm 5).}
    This configuration enables the iterative tightening of the domain $\Omega^m(x; \mathcal{A})$.
    \begin{itemize}
        \item \textit{Performance:} The average number of freed features jumped to \textbf{476.38 pixels} (approx 46\% of the image).
        \item \textit{Insight:} This dramatic increase (from $\sim$5 to $\sim$477) proves that the {adaptive abstraction} mechanism is critical. By iteratively constraining the cardinality, FAME recovers features that were previously masked by over-approximation.
        \item \textit{Solver Comparison:} Remarkably, even in this complex iterative setting, the Greedy approach (size 476.38) remained extremely close to the optimal MILP solution (size 477.76), with a negligible gap of $<0.4\%$. This strongly justifies using the faster Greedy heuristic for scalability.
        \item \textit{Runtime:} The average runtime for this intensive recursive search was approximately \textbf{1934.94 seconds} ($\sim$32 minutes).
    \end{itemize}

\item \textbf{Full Pipeline (Iteration + Singleton Refinement).}
    This represents the final output of the complete FAME pipeline, including final safety checks and singleton refinement.
    \begin{itemize}
        \item \textit{Explanation Compactness:} The pipeline successfully certified a robust explanation with an average of \textbf{240.84 freed features} (pixels) across the full dataset.
        \item \textit{Efficiency:} The breakdown confirms that FAME can navigate the search space of deep networks where exact enumerations fail, producing sound abstract explanations ($WAXp^A$) significantly faster than the timeout threshold of exact solvers.
    \end{itemize}
    
\end{itemize}

\paragraph{Discussion and Future Directions.}
While the computational cost ($\sim$32 mins) is higher than for smaller models, these results establish that the {Abstract Batch Certificate} ($\Phi$) and recursive refinement scale mathematically to residual connections without theoretical blockers. The gap between the abstract explanation size and the true minimal explanation is driven primarily by the looseness of the abstract bounds (LiRPA CROWN) on deep networks. Future work integrating tighter abstract interpretation methods (e.g., $\alpha$-CROWN) into the FAME engine will directly improve these results.

}
\section{Disclosure: Usage of LLMs}
An LLM was used solely as a writing assistant to correct grammar, fix typos, and enhance clarity.
It played no role in generating research ideas, designing the study, analyzing data, or interpreting
results; all of these tasks were carried out exclusively by the authors.
\end{document}